\DeclareMathOperator*{\argmin}{argmin}
\newcommand{\realset}{\mathbb{R}}
\newcommand{\mat}[1]{\mathbf{#1}}
\newcommand{\loss}{\mathcal{L}}
\definecolor{ForestGreen}{rgb}{0.133, 0.545, 0.133}
\definecolor{torange}{rgb}{0.949, 0.522, 0.}
\definecolor{BrickRed}{rgb}{0.67, 0.29, 0.26}
\newcommand{\gcmark}{\textcolor{ForestGreen}{\ding{51}}}
\newcommand{\rxmark}{\textcolor{BrickRed}{\ding{55}}}
\newcommand{\method}{\textsc{LampQ}\xspace}
\newcommand{\methodfullbold}{\textbf{\underline{La}}yer-wise \textbf{\underline{M}}ixed \textbf{\underline{P}}recision \textbf{\underline{Q}}uantization for Vision Transformers\xspace}
\newcommand{\titletext}{\method: Towards Accurate Layer-wise Mixed Precision Quantization \\ for Vision Transformers\xspace}
\newtheorem{problem}{Problem}
\newtheorem{lemma}{Lemma}
\newcolumntype{Y}{>{\centering\arraybackslash}X}
\newtheoremstyle{dotless}{}{}{\itshape}{}{\bfseries}{}{ }{}
\theoremstyle{dotless}
\newcommand{\MP}{\tiny{\textsc{MP}}\normalsize\xspace}
\newcommand{\fpmodel}{f_{\theta}\xspace}
\newcommand{\quantparam}{\theta'\xspace}
\newcommand{\quantmodel}{f_{\theta'}\xspace}
\newcommand{\matrixfp}{\mat{M}\xspace}
\newcommand{\matrixquant}{\mat{M}'\space}
\newcommand{\matrixqdq}{\widehat{\mat{M}}}
\newcommand{\weightifp}{\mat{W}_i\xspace}
\newcommand{\actifp}{\mat{X}_i\xspace}
\newcommand{\weightiqdq}{\widehat{\mat{W}}_i}
\newcommand{\actiqdq}{\widehat{\mat{X}}_i}
\newcommand{\hessian}{\mat{H}\xspace}
\newcommand{\hessiani}{\mat{H}_i\xspace}
\newcommand{\hessianj}{\mat{H}_j\xspace}
\newcommand{\deltawi}{\Delta_{\mat{W}_i}}
\newcommand{\deltawj}{\Delta_{\mat{W}_j}}
\newcommand{\deltaxi}{\Delta_{\mat{X}_i}}
\newcommand{\deltaw}{\Delta_W}
\newcommand{\deltax}{\Delta_X}
\newcommand{\imageI}{\mathcal{I}}
\newcommand{\msa}{\text{MSA}\xspace}
\newcommand{\mlp}{\text{MLP}\xspace}
\newcommand{\qkv}{\textit{qkv}\xspace}
\newcommand{\proj}{\textit{proj}\xspace}
\newcommand{\fcone}{\textit{fc1}\xspace}
\newcommand{\fctwo}{\textit{fc2}\xspace}
\newcommand{\trace}{\textnormal{tr}}
\newcommand{\fisher}{\mat{F}}
\newcommand{\fisheri}{\mat{F}_i}
\newcommand{\smallsection}[1]{\noindent\textbf{#1.}}
\title{\titletext}
\author {
    Minjun Kim\textsuperscript{\rm 1}, 
    Jaeri Lee\textsuperscript{\rm 2},
    Jongjin Kim\textsuperscript{\rm 1},
    Jeongin Yun\textsuperscript{\rm 2},
    Yongmo Kwon\textsuperscript{\rm 2},
    U Kang\textsuperscript{\rm 1, 2}\thanks{ Corresponding author.},
}
\begin{document}

\maketitle

\begin{abstract}
    How can we accurately quantize a pre-trained Vision Transformer model?
Quantization algorithms compress Vision Transformers (ViTs) into low-bit formats, reducing memory and computation demands with minimal accuracy degradation.
However, existing methods rely on uniform precision, ignoring the diverse sensitivity of ViT components to quantization.
Metric-based Mixed Precision Quantization (MPQ) is a promising alternative, but previous MPQ methods for ViTs suffer from three major limitations: 1) coarse granularity, 2) mismatch in metric scale across component types, and 3) quantization-unaware bit allocation.
In this paper, we propose \textbf{\method} (\methodfullbold), an accurate
metric-based MPQ method for ViTs to overcome these limitations.
\method performs layer-wise quantization to achieve both fine-grained control and efficient acceleration, incorporating a type-aware Fisher-based metric to measure sensitivity.
Then, \method assigns bit-widths optimally through integer linear programming and further updates them iteratively.
Extensive experiments show that \method provides the state-of-the-art performance in quantizing ViTs pre-trained on various tasks such as image classification, object detection, and zero-shot quantization. 
\end{abstract}

\begin{links}
	\link{Code}{https://github.com/snudatalab/LampQ}
\end{links}

\section{Introduction}
\label{sec:intro}
\textit{How can we compress a pre-trained Vision Transformer model while maintaining accuracy?}
Vision Transformers (ViTs)~\cite{ViT, DeiT} have recently gained significant attention due to their superior performance across a wide range of computer vision problems
~\cite{ViTObjectDetection, SimVP, LLAVA}.
%
%
Despite their success, ViTs are difficult to deploy on resource-limited devices due to their complex architecture, along with significant memory and computational demands~\cite{Efficientformer, EfficientViT}.
Model quantization~\cite{SynQ, ZSQSurvey} mitigates these challenges by converting models into a low-bit format, which enables higher compression rate and faster inference with minimal performance loss over other compression methods such as pruning~\cite{KPrune, SPRINT}, knowledge distillation~\cite{Mustard, Pea-KD, PET}, and low-rank approximation~\citep{Falcon}.
Among the two approaches of quantization, Post-Training Quantization (PTQ)~\cite{ERQ, UniQuanF} is more suitable for ViTs since quantization-aware training~\cite{Q-ViT, GPUSQ-ViT} requires training that may take up to several days or weeks.

\begin{figure}[t]
	\centering
	\includegraphics[width=\linewidth]{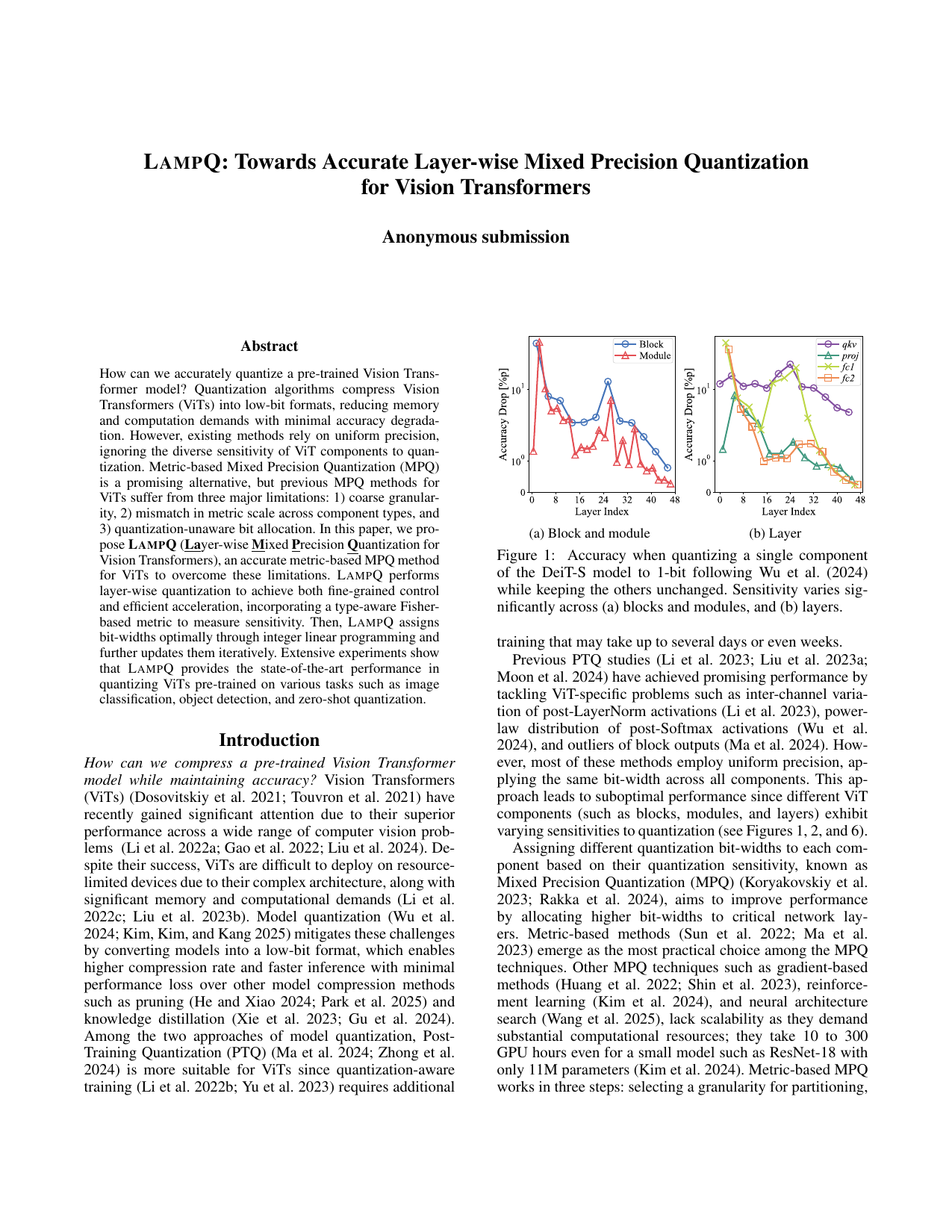}
	\caption{
		\label{fig:granularity}
		Accuracy when quantizing a single component of the DeiT-S model to 1-bit following~\citet{AdaLog} while keeping the others unchanged.
		Sensitivity varies significantly across (a) blocks and modules, and (b) layers.
	}
\end{figure}


Previous PTQ studies~\cite{RepQ-ViT, PD-Quant, IGQ-ViT} have achieved promising performance by tackling ViT-specific problems such as inter-channel variation of post-LayerNorm activations~\cite{RepQ-ViT}, power-law distribution of post-Softmax activations~\cite{AdaLog}, and outliers of block outputs~\cite{OAS}.
However, most of these methods employ uniform precision, applying the same bit-width across all components.
This approach leads to suboptimal performance since different ViT components (such as blocks, modules, and layers) exhibit varying sensitivities to quantization (see Figures~\ref{fig:granularity},~\ref{fig:vit}, and~\ref{fig:iterative}).

%
Assigning different quantization bit-widths to each component based on their sensitivity, known as Mixed Precision Quantization (MPQ)~\cite{OneShotMPQ, MPSurvey}, aims to improve performance by allocating higher bit-widths to critical network layers.
Metric-based methods~\cite{SensiMix, QE, OMPQ} emerge as the most practical choice among the MPQ techniques.
Other MPQ techniques such as
gradient~\cite{SDQ}, reinforcement learning~\cite{MetaMix}, and neural architecture search~\cite{JAQ},
lack scalability as they demand substantial computational resources; they take 10 to 300 GPU hours even for a small model such as ResNet-18 with only 11M parameters~\cite{MetaMix}.
%


\begin{figure}[t]
	\centering
	\includegraphics[width=0.95\linewidth]{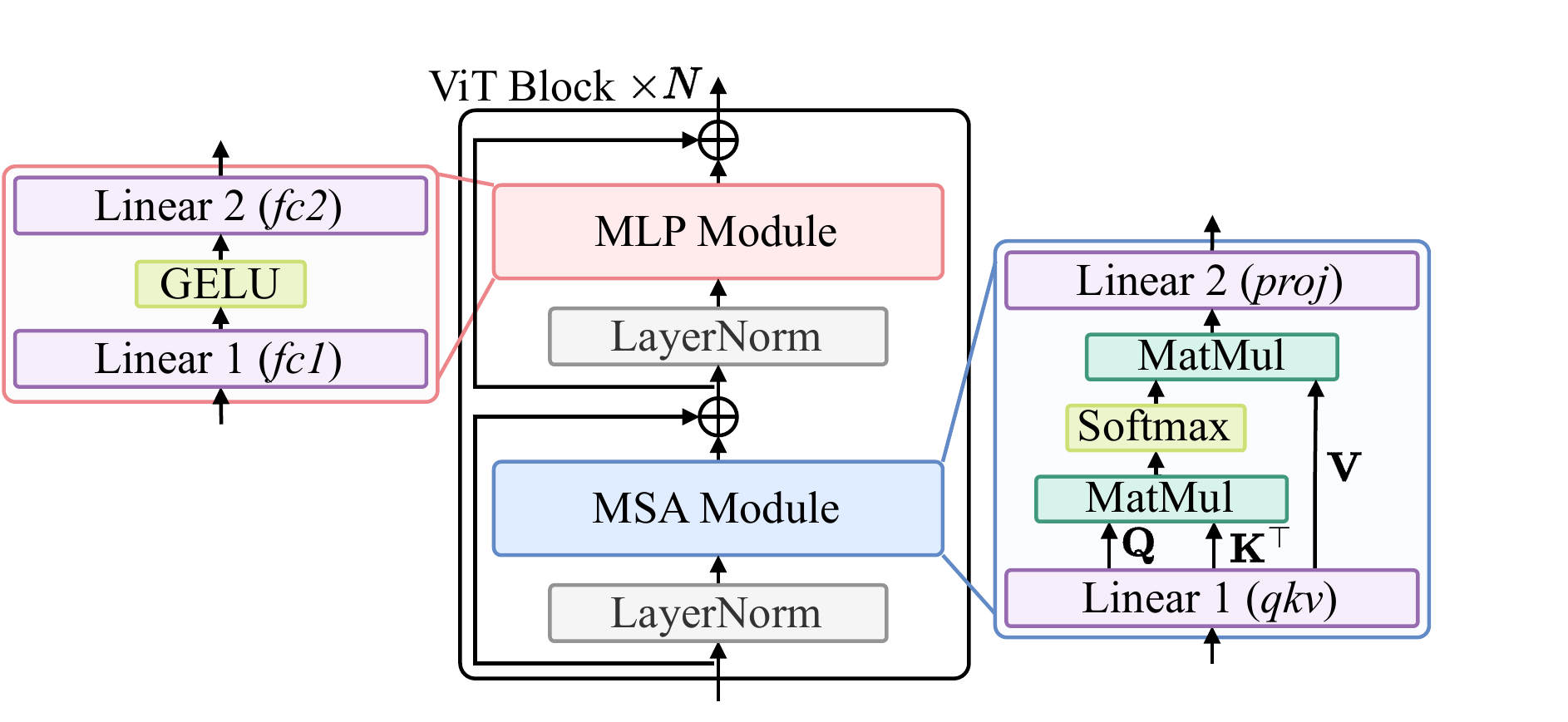}
	\caption{
		Illustration of a ViT model with $N$ blocks.
		Each block consists of two modules: \msa (red) and \mlp (blue),
    	and four layers: \qkv, \proj, \fcone, and \fctwo (purple).
	}
	\label{fig:vit}
\end{figure}


While metric-based MPQ has a potential to be an optimal solution, the only existing MPQ method for ViTs~\cite{VT-PTQ} exhibits limited performance due to its simplistic design of the following key components (see Table~\ref{tab:components}):
\begin{itemize}[leftmargin=3mm, topsep=1mm, itemsep=0mm]
	\item{
		\textbf{Granularity.}
		A module-wise MPQ approach applies the same bit-width across an entire module, disregarding the varying sensitivity of different layers (see Figure~\ref{fig:granularity}(b)).
		This coarse-grained granularity results in suboptimal bit allocation, harming layers that require finer precision.
	}
	\item{
		\textbf{Metric.}
		The nuclear norms of attention and feature maps vary significantly in scale depending on the module type, such as \msa and \mlp (see Figure~\ref{fig:metric}(a)).
        This discrepancy hinders direct comparison of metrics across modules, leading to inaccurate sensitivity assessments.
	}
	\item{
		\textbf{Bit assignment.}
			The Pareto frontier approach in prior quantization works is costly and relies on a fixed metric, which fails to reflect error changes during quantization and results in suboptimal decisions (see Figure~\ref{fig:iterative}).
	}
\end{itemize}

In this paper, we propose \textbf{\method} (\methodfullbold), an accurate PTQ method for ViTs by a layer-wise metric-based MPQ.
By choosing layers as its quantization granularity, \method ensures both fine-grained control and efficient acceleration via kernel support.
\method quantifies each layer’s sensitivity with a type-aware Fisher-based metric, enabling direct comparison between layers with different types.
\method assigns bits initially through integer linear programming and then updates them iteratively to dynamically reflect changes in sensitivity.
\method is powerful and versatile since it is easily integrated with any PTQ method for ViTs, achieving the state-of-the-art performance.

\def\arraystretch{0.7}
\begin{table}[t]
	\centering
	\caption{
		Comparison of the key components in metric-based MPQ between VT-PTQ~\cite{VT-PTQ} and \method.
	}
		\setlength{\tabcolsep}{3pt}
        \small
		\begin{tabular}{cccc}
			\hline
			\toprule
			\textbf{Component} & \makecell{\textbf{VT-PTQ}} & \multicolumn{2}{c}{\makecell{\textbf{\method} \textbf{(Proposed)}}}\\
			\midrule
			\multirow{2}{*}{\textbf{Granularity}} & \multirow{2}{*}{Module-wise} & \multicolumn{2}{c}{\multirow{2}{*}{Layer-wise}}\\
			& & & \\
			\midrule
			\textbf{Metric} & \makecell{Nuclear norm of \\ attention / feature maps} &  \makecell{Trace of \\ Fisher info.} & \makecell{Layer-wise \\ recon. error}\\
			\midrule
			\textbf{Bit assign.} & \makecell{Pareto frontier} & \makecell{Integer linear \\ programming} & \makecell{Iterative \\ bit update} \\
			\bottomrule
			\hline
		\end{tabular}
	\label{tab:components}
\end{table}


Our main contributions are summarized as follows:
\begin{itemize}[leftmargin=3.2mm, itemsep=0mm, topsep=0mm]
    \item{
        \textbf{Observation.}
		We observe three major challenges in designing a metric-based MPQ method for ViTs each associated with a specific component: granularity, metric, and bit assignment (see Figures~\ref{fig:granularity},~\ref{fig:metric}, and~\ref{fig:iterative}).
	}
    \item{
    	\textbf{Method.}
		We propose \method, an accurate MPQ method for ViTs.
		\method carefully assigns quantization bits to achieve both efficiency and accuracy (see Figure~\ref{fig:method}).
		\method considers the differences between types of layers by a type-aware Fisher-based metric,
		and incorporates the quantization feedback by iterative bit updates.
    }
    \item{
        \textbf{Experiments.}
        We experimentally show that \method consistently outperforms its competitors on various models and datasets in image classification, object detection, and zero-shot quantization tasks (see Tables~\ref{tab:q1}, \ref{tab:q2}, and \ref{tab:q3}).
    }
\end{itemize}

%

\section{Preliminaries}
\label{sec:prelim}

\subsection{Problem Definition}
\label{subsec:2_problem}

Given a pre-trained model, a small calibration dataset, and quantization bits, Post-Training Quantization (PTQ) targets to optimize the quantized model to maintain performance.
%
\begin{problem}[Post-training Quantization for Vision Transformers (ViTs)~\cite{ERQ, AdaLog}]
	{\color{white}.}
	\label{problem}
	\begin{itemize}[leftmargin=3.2mm, topsep=0mm]
		\item{
			\textbf{Input}:
			a ViT model $\fpmodel$ with parameters $\theta$ pre-trained on a target task $\mathcal{T}$,
			a sample dataset $\mathbb{D}=\{(\mathbf{x}_i, y_i)\}_{i=1}^S$ of size $S$, and
			quantization bits $b_t$.
		}
		\item{
			\textbf{Output}:
			a quantized model $\quantmodel$ with parameters $\quantparam$ within the $b_t$-bit limit minimizing performance degradation on $\mathcal{T}$.
		}
	\end{itemize}
\end{problem}

\subsection{Vision Transformers}
\label{subsec:2_vit}

ViTs~\cite{ViT} are deep learning models that apply self-attention to capture image context and improve feature representation.
Figure~\ref{fig:vit} illustrates the architecture of a standard ViT block, which we reformulate in a simplified manner.
%
%
%
%
Each block consists of two modules: Multi-head Self-Attention (MSA) and Multi-Layer Perceptron (MLP).

\smallsection{Simplified Formulation}
To facilitate layer-wise quantization analysis, we abstract each ViT block as consisting of four core linear layers: $\qkv$, $\proj$, $\fcone$, and $\fctwo$.
While this formulation omits the inner workings of the attention mechanism, we empirically validate its effectiveness later
(see Section \textit{`Type-aware Fisher-based Metric'}).
This abstraction supports a metric both principled and effective for ViTs.

\subsection{Model Quantization}
\label{subsec:2_quant}

Model quantization involves converting a model into a lower bit precision.
We focus on asymmetric uniform quantization, following previous researches~\cite{RepQ-ViT, AdaLog}.
Given a matrix $\matrixfp$, the $B$-bit quantized matrix $\matrixquant = \left\lfloor \matrixfp/s - z + 0.5 \right\rfloor$,
where $s=(r_{max}-r_{min})/(2^{B}-1)$ is the scaling factor,
$z=r_{min}/s + 2^{B-1}$ is the integer offset,
and $(r_{min}, r_{max})$ are the lower and upper bounds of $\matrixfp$.
The corresponding dequantized value is given by $\matrixqdq= s(\matrixquant+z)$.
While some methods quantize only the weights, we quantize both weights and activations
for efficient inference.

\begin{figure}[t]
	\centering
	\includegraphics[width=0.95\columnwidth]{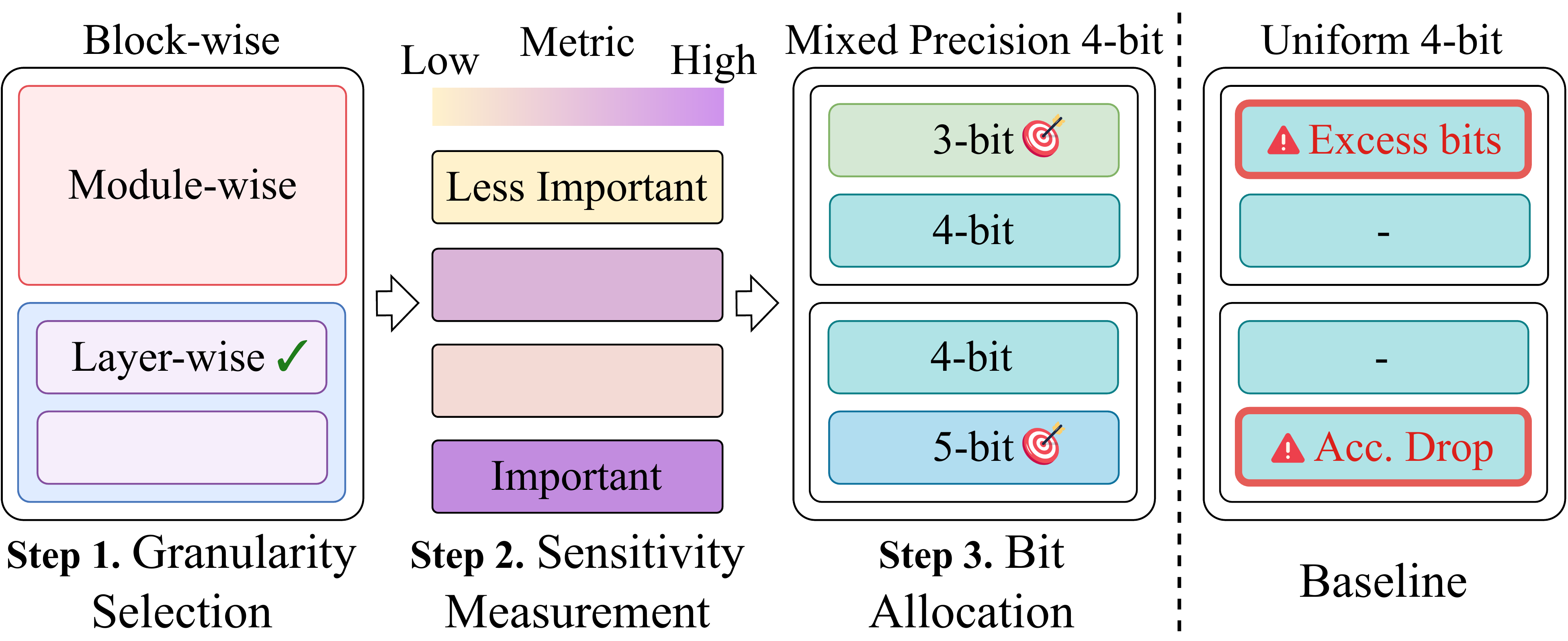}
	\caption{
		Illustration of how metric-based MPQ works.
        They first partition model parameters into groups and measure their sensitivity.
        More bits are allocated to sensitive groups, reducing model size while maintaining performance.
	}
	\label{fig:mpq}
\end{figure}


\subsection{Metric-based Mixed Precision Quantization}
\label{subsec:2_metric_mpq}

In this work, we follow a metric-based approach for Mixed Precision Quantization (MPQ)~\cite{QE, OMPQ}.
As shown in Figure~\ref{fig:mpq}, the three key components of metric-based MPQ are granularity, metric, and bit assignment.
Granularity determines the level (e.g., per-layer, per-channel, or per-tensor) at which bit-width allocation is applied, and is selected in the first step of MPQ to define the partitioning unit.
Sensitivity\footnote{In the remainder of the paper, we use the term `sensitivity' to refer to `quantization sensitivity' for simplicity.}
metric evaluates how each partition responds to quantization and is used to assess the impact of different bit-widths.
Finally, bit assignment determines the bit-width allocation based on sensitivity, prioritizing important segments under a given budget.

 \smallsection{Hessian-based Importance}
 In weight-only quantization of CNN models, researchers introduce the trace $\trace(\mat{H}_i)$ of each layer's Hessian matrix $\mat{H}_i$ as a potential metric, as shown in Lemma~\ref{lemma}~\cite{HAWQ-V2, HAWQ-V3}.
%
\begin{lemma}[Layer importance and Hessian trace]	
	{\color{white}Thank you}
	\label{lemma}
	Assume that for all layers $l_i$ with weight vector $\mat{W}_i$, its gradient $\mat{g}_i=\mat{0}$, Hessian $\hessiani$ is positive semi-definite for target loss $\loss$, and
	$\exists \alpha \in \mathbb{R},  \deltawi=\widehat{\mat{W}}_i-\mat{W}_i=\alpha\sum_{k}{\mat{v}_{ik}}$, where $\widehat{\mat{W}}_i$ is a dequantized form of the quantized weight $\mat{W}'_i$ and $\{\mat{v}_{ik}\}$ are the eigenvectors of
	$\hessiani$.
	For two layers $l_i$ and $l_j$, if $\|\deltawi\|^2_2 = \|\deltawj\|^2_2$ and $\trace(\hessiani) > \trace(\hessianj),$ then,
	\begingroup\small
	\begin{equation*}
		\loss(\widehat{\mat{W}}_i)\geq\loss(\widehat{\mat{W}}_j).
	\end{equation*}
	\endgroup
\end{lemma}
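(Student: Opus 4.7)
The plan is to apply a second-order Taylor expansion of the loss around each original weight vector and then exploit the eigendecomposition of the Hessian together with the very specific form of the perturbation $\Delta_{W_i}=\alpha\sum_k v_{ik}$. Concretely, for any layer $l_i$ I would write
$$
\loss(\widehat{\mat{W}}_i)=\loss(\mat{W}_i)+\mat{g}_i^{\top}\deltawi+\tfrac{1}{2}\,\deltawi^{\top}\hessiani\,\deltawi+o(\|\deltawi\|^2).
$$
The assumption $\mat{g}_i=\mat{0}$ kills the linear term, and the positive semi-definiteness of $\hessiani$ together with the smallness of quantization perturbations justifies dropping the remainder, leaving the quadratic form as the dominant contribution to the loss change.

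Next I would plug in $\deltawi=\alpha\sum_{k}\mat{v}_{ik}$. Since $\hessiani$ is real symmetric and PSD, its eigenvectors $\{\mat{v}_{ik}\}$ form an orthonormal set with nonnegative eigenvalues $\lambda_{ik}$. Using $\hessiani\mat{v}_{ik}=\lambda_{ik}\mat{v}_{ik}$ and orthonormality, the cross terms vanish and
$$
\deltawi^{\top}\hessiani\,\deltawi=\alpha^{2}\sum_{k}\lambda_{ik}=\alpha^{2}\,\trace(\hessiani).
$$
Doing the same for layer $l_j$ yields $\deltawj^{\top}\hessianj\,\deltawj=\alpha^{2}\,\trace(\hessianj)$. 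At this point the loss change for each layer is, to leading order, $\tfrac{\alpha^{2}}{2}\trace(\hessiani)$ versus $\tfrac{\alpha^{2}}{2}\trace(\hessianj)$.

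Finally, I would connect the scale $\alpha$ to the perturbation norms. Because the eigenvectors are orthonormal, $\|\deltawi\|_2^{2}=\alpha^{2}\sum_k\|\mat{v}_{ik}\|_2^{2}=\alpha^{2} d_i$ where $d_i$ is the number of eigenvectors summed; the hypothesis $\|\deltawi\|_2^{2}=\|\deltawj\|_2^{2}$ (interpreted under the standing convention that the two layers are compared with a common $\alpha$, as is implicit in the lemma statement) pins down the proportionality so that both loss changes are scaled by the same positive factor. Combining this with $\trace(\hessiani)>\trace(\hessianj)$ gives $\loss(\widehat{\mat{W}}_i)-\loss(\mat{W}_i)\ge \loss(\widehat{\mat{W}}_j)-\loss(\mat{W}_j)$, and since the two layers share the common unperturbed baseline $\loss(\mat{W})$, the desired inequality $\loss(\widehat{\mat{W}}_i)\ge\loss(\widehat{\mat{W}}_j)$ follows.

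The main obstacle I anticipate is the bookkeeping around the scalar $\alpha$ and the implicit dimensionality assumption: the lemma's hypothesis that the perturbation aligns uniformly with the sum of all eigenvectors is delicate, and one must be careful that the equal-norm constraint does not secretly rescale the quadratic forms being compared. Apart from this, the only nontrivial analytic step is justifying the truncation of the Taylor series; this is standard for small quantization perturbations and is the same convention used in the HAWQ-V2/V3 literature cited by the paper.
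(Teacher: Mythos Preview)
Your proposal is correct and follows exactly the approach of the HAWQ-V2 paper to which the present paper defers for its proof: second-order Taylor expansion with vanishing gradient, then exploiting the aligned-perturbation assumption $\deltawi=\alpha\sum_k\mat{v}_{ik}$ together with orthonormality of the eigenvectors to collapse the quadratic form to $\alpha^2\,\trace(\hessiani)$, and finally comparing traces under the equal-norm hypothesis. Your caveat about the common $\alpha$ (equivalently, matching dimensions so that $\|\deltawi\|_2^2=\|\deltawj\|_2^2$ is consistent with a shared scalar) is precisely the delicate point in the original HAWQ-V2 argument as well, so there is nothing missing relative to the cited source.
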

\begin{proof}
	Refer to Lemma 1 of HAWQ-V2~\cite{HAWQ-V2}.
\end{proof}
%

\section{Proposed Method}
\label{sec:method}
\begin{figure}[t]
    \centering
    \includegraphics[width=0.95\linewidth]{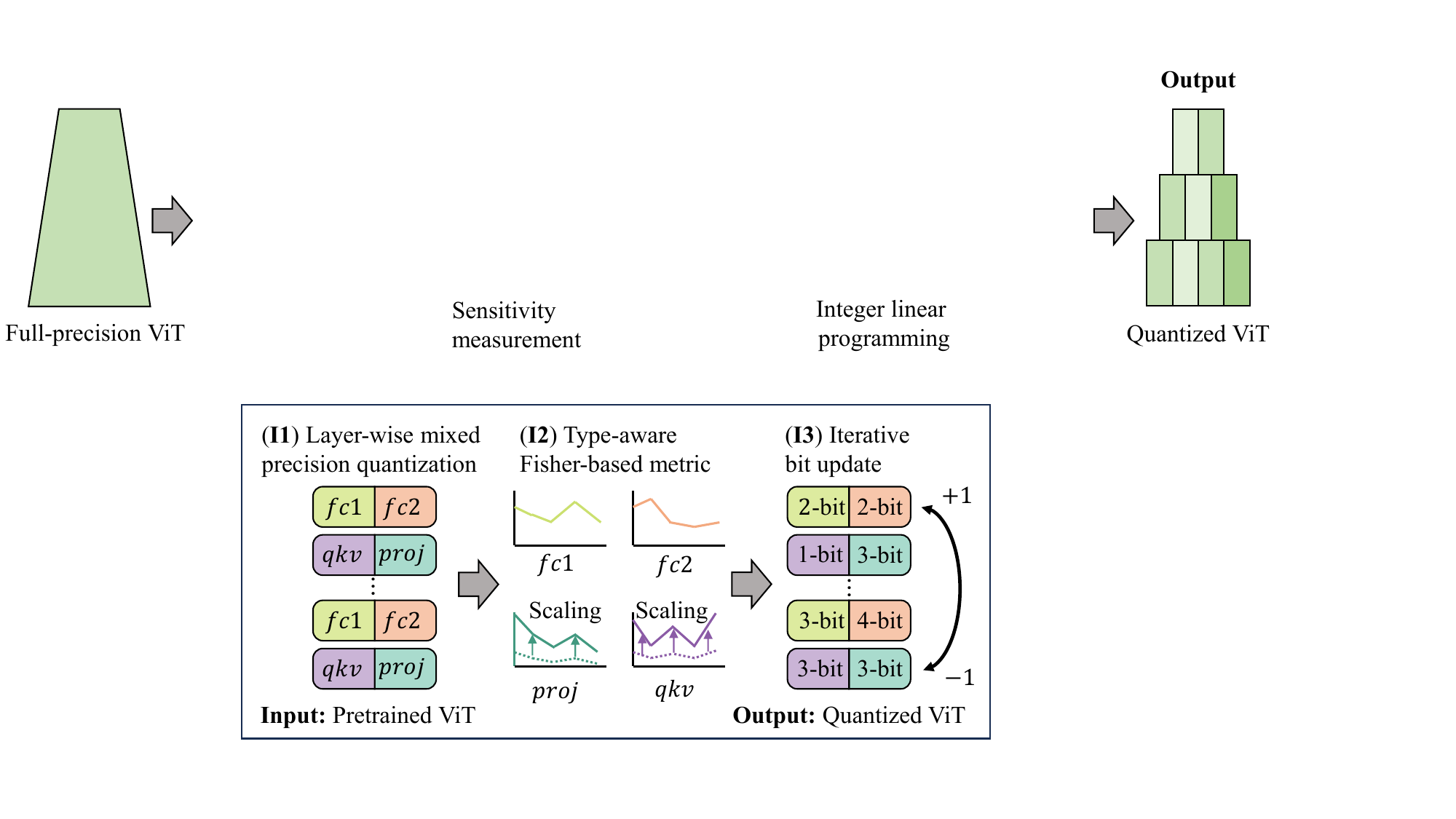}
    \caption{
        Overall architecture of \method.
        Our main ideas are I1) layer-wise mixed-precision quantization, I2) type-aware Fisher-based metric, and I3) iterative bit update.
    }
    \label{fig:method}
\end{figure}


%
We propose \textbf{\method} (\methodfullbold), an accurate mixed-precision PTQ method for ViTs.
These are the three main challenges that must be tackled:

\begin{itemize}[leftmargin=6.7mm, itemsep=0mm, topsep=0mm]
    \item[\textbf{C1.}] \textbf{Coarse-grained granularity.}
    Previous methods enforce uniform quantization within each module, limiting precision flexibility at deeper levels.
    How can we apply finer granularity while ensuring acceleration?

    \item[\textbf{C2.}] \textbf{Mismatch in metric scale across component types.}
    Sensitivity scores vary widely in scale across component types, hindering direct comparison.
    How can we design a metric that enables fair comparison across heterogeneous components?

    \item[\textbf{C3.}] \textbf{Quantization-unaware bit allocation.}
	Bit allocation based on one-shot full-precision metrics is suboptimal,
    as it overlooks how quantizing earlier layers affects the sensitivity of later ones.
	How can we incorporate quantization feedback
    without compromising efficiency?

\end{itemize}

We propose three main ideas to address the challenges:

\begin{itemize}[leftmargin=5.8mm, itemsep=0mm, topsep=0mm]
    \item[\textbf{I1.}] \textbf{Layer-wise mixed precision quantization.}
    We propose an MPQ strategy that flexibly allocates bit-widths at layer level, enhancing both inference speed and performance of the quantized model.

    \item[\textbf{I2.}] \textbf{Type-aware Fisher-based metric.}
    We measure the sensitivity of each layer with the trace of its Fisher information matrix and introduce type-aware scaling to address inter-layer range mismatches.

    \item[\textbf{I3.}] \textbf{Iterative bit update.}
    We assign initial bit-widths via Integer Linear Programming (ILP) and update them iteratively based on estimated error to correct misallocations.

\end{itemize}

Figure~\ref{fig:method} depicts the overall procedure of \method.
%
Given a pre-trained full-precision ViT with $N$ blocks, we structure it into $4N$ layers, with $N$ layers for each layer types: \qkv, \proj, \fcone, and \fctwo.
Next, we quantify the sensitivity of each layer towards quantization by evaluating the trace of its Fisher information matrix, scaled according to the layer type.
With this sensitivity metric, we determine the initial bit-widths by solving an ILP problem.
After quantizing the ViT with the initial bit-widths, \method iteratively updates the bit allocation with the layer-wise reconstruction error, ensuring that the updated bit allocation reflects the quantization-induced error.
%
%
Algorithm of \method is formulated as Alg.~\ref{alg:method} (see Section \textit{`Algorithm'}).
For clarity, we reference the relevant lines for each idea.
Note that \method is compatible with any quantization method for ViTs;
we adopt AdaLog~\shortcite{AdaLog} as the baseline for best performance.

\subsection{Layer-wise Mixed Precision Quantization}
\label{subsec:I1}

\smallsection{Observation}
We present an empirical observation that layer-wise sensitivity varies significantly, making module-based MPQ approaches suboptimal.
We prepare a full-precision DeiT-S~\shortcite{DeiT} model and quantize each component following AdaLog~\cite{AdaLog} while keeping other components unchanged to ignore their effects.
Figure~\ref{fig:granularity} shows that accuracy degradation varies significantly not only across (a) blocks and modules, but also (b) individual layers.
Within each module, \qkv and \proj in \msa, and \fcone and \fctwo in \mlp show notably different sensitivities, especially in the middle layers (16–32).
This variation in sensitivity is consistently observed across different models (see Section \textit{`Further Experiments'}).
However, module-wise MPQ methods~\cite{VT-PTQ} assign the same bits to all components within a module, ignoring this sensitivity variation.

\smallsection{Solution}
Motivated by this observation, we allocate varying bit-widths for each layer to reflect their sensitivity differences.
To achieve this, we propose a layer-wise MPQ scheme that determines its granularity at the layer level.
Finer-grained quantization beyond the module level improve sensitivity estimation, but often introduce hardware inefficiencies.
In particular, granularity finer than the layer level (e.g., per-channel or per-weight) introduces varying bit-widths within the same weight or activation matrix, requiring frequent conversions to full precision during computation, leading to substantial runtime overhead.
By contrast, layer-wise quantization offers a good trade-off between inference speed and performance, as layers are the smallest units compatible with low-bit kernels~\cite{Any-PrecisionLLM}.
%
%
\subsection{Type-aware Fisher-based Metric}
\label{subsec:I2}

\smallsection{Observation}
Designing an accurate and efficient metric to estimate each component’s quantization sensitivity is crucial for metric-based MPQ for ViTs.
VT-PTQ estimates the quantization sensitivity of \msa and \mlp modules by evaluating the nuclear norm of attention and feature maps, respectively.
However, this approach has two major limitations.
First, the metric relies on empirical intuition, which lacks a solid theoretical foundation.
A greater nuclear norm indicates more information in the matrix, which may suggest an important layer, but it does not directly correlate with the performance.
Second, the metric exhibits a large scale gap between two module types, \msa and \mlp.
Figure~\ref{fig:metric}(a) illustrates the metric values measured on a DeiT-S model with VT-PTQ.
The metric values for \msa and \mlp differ by 10 to 40 times, making direct comparison between inter-type modules challenging.
Hence, developing a metric with theoretical justification and scaling consistency is essential.

\begin{figure}[t]
	\centering
	\includegraphics[width=\linewidth]{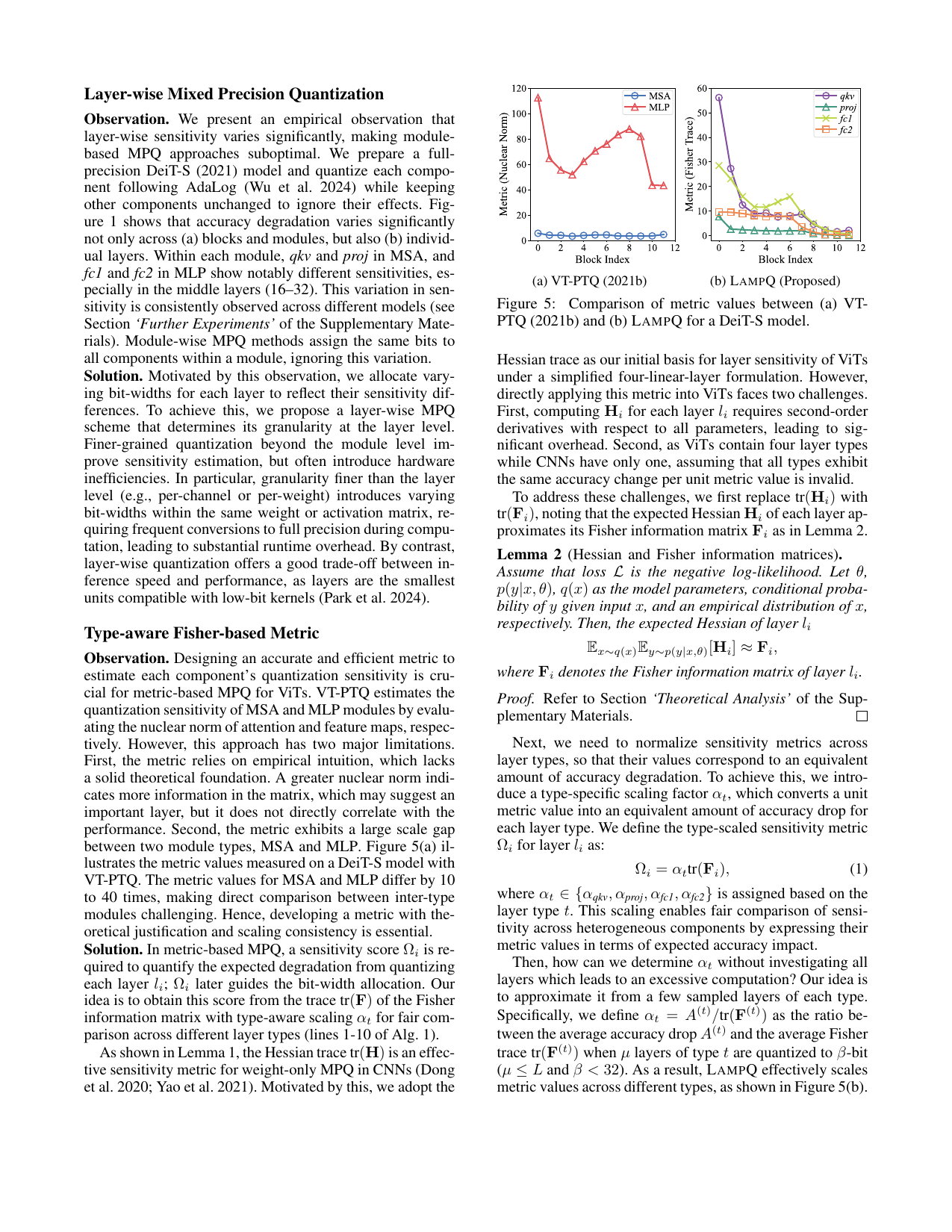}
	\caption{
			\label{fig:metric}
		Comparison of metric values between (a) VT-PTQ~\shortcite{VT-PTQ} and (b) \method for a DeiT-S model.
	}
\end{figure}


\smallsection{Solution}
	In metric‑based MPQ, a sensitivity score $\Omega_i$ is required to quantify the expected degradation from quantizing each layer $l_i$; $\Omega_i$ later guides the bit‑width allocation.
	Our idea is to obtain this score from the trace $\trace(\fisher)$ of the Fisher information matrix with type‑aware scaling $\alpha_t$ for fair comparison across different layer types (lines 1-10 of Alg. 1).

As shown in Lemma~\ref{lemma}, the Hessian trace $\trace(\hessian)$  is an effective sensitivity metric for weight-only MPQ in CNNs~\cite{HAWQ-V2, HAWQ-V3}.
Motivated by this, we adopt the Hessian trace as our initial basis for layer sensitivity of ViTs under a simplified four-linear-layer formulation.
However, directly applying this metric into ViTs faces two challenges.
First, computing $\hessiani$ for each layer $l_i$ requires second-order derivatives with respect to all parameters, leading to significant overhead.
Second, as ViTs contain four layer types while CNNs have only one, assuming that all types exhibit the same accuracy change per unit metric value is invalid.

To address these challenges, we first replace $\trace(\hessiani)$ with $\trace(\fisheri)$,
noting that the expected Hessian $\hessiani$ of each layer approximates its Fisher information matrix $\fisheri$ as in Lemma~\ref{lemma:fisher}.
%
\begin{lemma}[Hessian and Fisher information matrices]
	{\color{white}Th}
	\label{lemma:fisher}
	Assume that loss $\loss$ is the negative log-likelihood.
	Let $\theta$, $p(y|x, \theta)$, $q(x)$ as the model parameters, conditional probability of $y$ given input $x$, and an empirical distribution of $x$, respectively.
	Then, the expected Hessian of layer $l_i$
	%
	\begin{equation*}
		\mathbb{E}_{x \sim q(x)} \mathbb{E}_{y \sim p(y|x,\theta)}[\hessiani] \approx \fisheri,
	\end{equation*}
	where $\fisheri$ denotes the Fisher information matrix of layer $l_i$.
\end{lemma}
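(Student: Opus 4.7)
The plan is to invoke the classical information-matrix identity: under the negative log-likelihood loss, the expected Hessian coincides with the Fisher information matrix. I would simply specialize this identity to the block of parameters $\mat{W}_i$ that define layer $l_i$, and then average over the empirical input distribution $q(x)$.

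First, I would start from $\loss = -\log p(y|x,\theta)$ and differentiate twice with respect to $\mat{W}_i$. Using the log-derivative identity $\nabla \log p = \nabla p / p$, the per-layer Hessian decomposes cleanly into two pieces: a curvature piece proportional to $\nabla^2_{\mat{W}_i} p(y|x,\theta) / p(y|x,\theta)$, and a score outer-product piece $\nabla_{\mat{W}_i}\log p \,(\nabla_{\mat{W}_i}\log p)^T$.

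Second, I would take the inner expectation $\mathbb{E}_{y \sim p(y|x,\theta)}$. Under standard regularity conditions that allow exchanging $\nabla^2_{\mat{W}_i}$ with integration (summation) over $y$, the curvature piece vanishes because $\int \nabla^2_{\mat{W}_i} p(y|x,\theta)\, dy = \nabla^2_{\mat{W}_i} \int p(y|x,\theta)\, dy = \nabla^2_{\mat{W}_i} 1 = 0$. What remains is exactly the conditional Fisher information matrix for layer $l_i$ at input $x$. Averaging over $x \sim q(x)$ then yields $\fisheri$. The $\approx$ in the statement (rather than $=$) reflects that $q(x)$ is an empirical calibration distribution, so identifying the resulting sample average with the population Fisher is the usual empirical-vs-population approximation; in the Bayesian/asymptotic sense it becomes exact as the sample size grows.

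The main — though mild — obstacle is the technical justification for the interchange of differentiation and integration needed to conclude $\int \nabla^2_{\mat{W}_i} p \, dy = 0$. This requires standard integrability and smoothness conditions on $p(y|x,\theta)$ in $\theta$. Fortunately, for ViT classification models the output head is a softmax over a finite label set, so the ``integral'' over $y$ is a finite sum and the interchange is automatic; no further argument beyond the smoothness of the softmax in $\mat{W}_i$ is required, which keeps the proof self-contained.
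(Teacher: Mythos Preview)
Your proposal is correct and follows essentially the same route as the paper: decompose the Hessian of $-\log p$ into the curvature term $-\nabla^2 p / p$ plus the score outer product, show the curvature term vanishes under the inner expectation via $\sum_{y} \nabla^2 p = \nabla^2 \sum_y p = \nabla^2 1 = 0$, and attribute the $\approx$ to the empirical nature of $q(x)$. The paper likewise works over a finite class set, so your remark that the differentiation--summation interchange is automatic for softmax classification matches their treatment exactly.
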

\begin{proof}
	Refer to Section \textit{`Theoretical Analysis'}.
\end{proof}
%
Next, we need to normalize sensitivity metrics across layer types, so that their values correspond to an equivalent amount of accuracy degradation.
To achieve this, we introduce a type‑specific scaling factor $\alpha_t$, which converts a unit metric value into an equivalent amount of accuracy drop for each layer type.
We define the type‑scaled sensitivity metric $\Omega_i$ for layer $l_i$ as $\Omega_i = \alpha_{t} \trace(\fisheri),$
where $\alpha_{t} \in \{ \alpha_{\qkv}, \alpha_{\proj}, \alpha_{\fcone}, \alpha_{\fctwo} \}$ is assigned based on the layer type $t$.
This scaling enables fair comparison of sensitivity across heterogeneous components by expressing their metric values in terms of expected accuracy impact.

Then, how can we determine $\alpha_t$ without investigating all layers which leads to an excessive computation?
Our idea is to approximate it from a few sampled layers of each type.
Specifically, we define $\alpha_t = A^{(t)}/\trace(\fisher^{(t)})$ as the ratio between the average accuracy drop $A^{(t)}$ and the average Fisher trace $\trace(\fisher^{(t)})$ when $\mu$ layers of type $t$ are quantized to $\beta$-bit ($\mu \leq L$ and $\beta < 32$).
%
As a result, \method effectively scales metric values across different types, as shown in Figure~\ref{fig:metric}(b).

\begin{figure}[t]
	\centering
	\includegraphics[width=\linewidth]{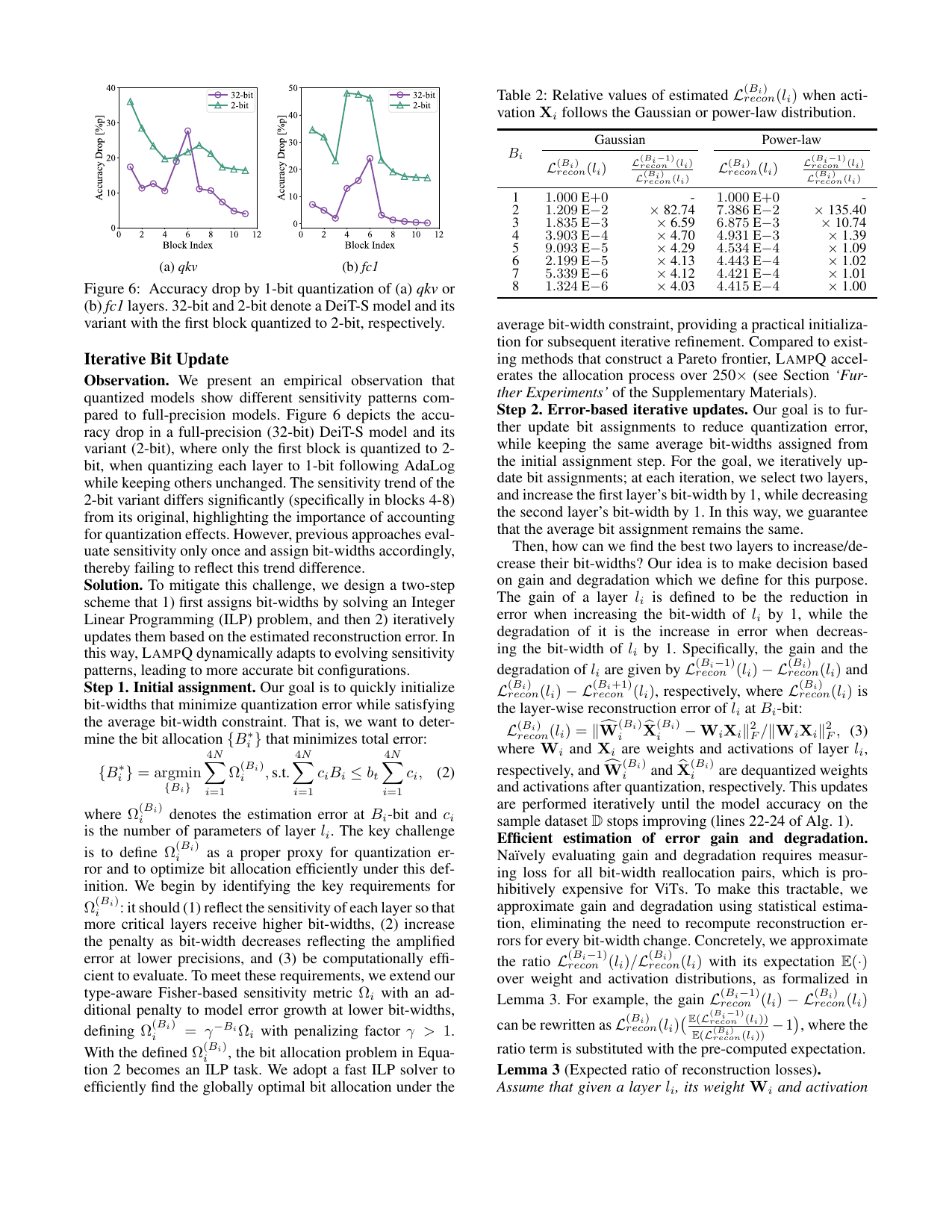}
	\caption{
		\label{fig:iterative}
		%
		Accuracy drop by 1-bit quantization of (a) \qkv or (b) \fcone layers.
		32-bit and 2-bit denote a DeiT-S model and its variant with the first block quantized to 2-bit, respectively.
	}
\end{figure}


\subsection{Iterative Bit Update}
\label{subsec:I3}
\smallsection{Observation}
We present an empirical observation that quantized models show different sensitivity patterns compared to full-precision models.
%
Figure~\ref{fig:iterative} depicts the accuracy drop in a full-precision (32-bit) DeiT-S model and its variant (2-bit), where only the first block is quantized to 2-bit, when quantizing each layer to 1-bit following AdaLog while keeping others unchanged.
The sensitivity trend of the 2-bit variant differs significantly (specifically in blocks 4-8) from its original,
highlighting the importance of accounting for quantization effects.
However, previous approaches evaluate sensitivity only once and assign bit-widths accordingly, thereby failing to reflect this trend difference.

\smallsection{Solution}
To mitigate this challenge, we design a two-step scheme that 1) first assigns bit-widths by solving an Integer Linear Programming (ILP) problem, and then 2) iteratively updates them based on the estimated reconstruction error.
In this way, \method dynamically adapts to evolving sensitivity patterns, leading to more accurate bit configurations.

\smallsection{Step 1. Initial assignment}
Our goal is to quickly initialize bit-widths that minimize quantization error while satisfying the average bit-width constraint.
That is, we want to determine the bit allocation $\{B_i^*\}$ that minimizes total error:
%
\begingroup \small
\begin{equation*}
	\label{eq:ilp}
	\{B_i^*\} = \argmin_{\{B_i\}} \sum_{i=1}^{4N} {\Omega_i^{(B_i)}},
	\textnormal{s.t.}\sum_{i=1}^{4N} {c_i B_i} \leq b_{t} \sum_{i=1}^{4N} {c_{i}},
\end{equation*}
\endgroup
where $\Omega_i^{(B_i)}$ denotes the estimation error at $B_i$-bit and $c_i$ is the number of parameters of layer $l_i$.
The key challenge is to define $\Omega_i^{(B_i)}$ as a proper proxy for quantization error and to optimize bit allocation efficiently under this definition.
We begin by identifying the key requirements for $\Omega_i^{(B_i)}$:
it should (1) reflect the sensitivity of each layer so that more critical layers receive higher bit-widths,
(2) increase the penalty as bit‑width decreases reflecting the amplified error at lower precisions,
and (3) be computationally efficient to evaluate.
To meet these requirements, we extend our type‑aware Fisher‑based sensitivity metric $\Omega_i$ with an additional penalty to model error growth at lower bit‑widths,
defining $\Omega_i^{(B_i)} = \gamma^{-B_i} \Omega_i$ with penalizing factor $\gamma > 1$.
With the defined $\Omega_i^{(B_i)}$, the bit allocation problem becomes an ILP task.
We adopt a fast ILP solver to efficiently find the global optimum under the average bit‑width constraint,
providing a practical initialization for subsequent iterative refinement.
%
%
Compared to existing methods that construct a Pareto frontier, \method accelerates the allocation process over 250$\times$ (see Section \textit{`Further Experiments'}).


\begin{table}[t]
	\centering
	\caption{
		Relative values of estimated $\loss_{recon}^{(B_i)}(l_i)$ when activation $\mat{X}_i$ follows the Gaussian distribution $\mathcal{N}(0, 1)$.
		%
	}
		\setlength{\tabcolsep}{12pt}
		\small
		\begin{tabular}{ccr}
			\hline
			\toprule
			$B_i$ & $\loss_{recon}^{(B_i)}(l_i)$ & $\frac{\loss_{recon}^{(B_i-1)}(l_i)}{  \loss_{recon}^{(B_i)}(l_i)}$
			\\
			\midrule
			1 & $1.000$ E$+0$ & - \\
			2 & $1.209$ E$-2$ & $\times$ 82.74 \\
			3 & $1.835$ E$-3$ & $\times$ 6.59 \\
			4 & $3.903$ E$-4$ & $\times$ 4.70 \\
			5 & $9.093$ E$-5$ & $\times$ 4.29 \\
			6 & $2.199$ E$-5$ & $\times$ 4.13 \\
			7 & $5.339$ E$-6$ & $\times$ 4.12 \\
			8 & $1.324$ E$-6$ & $\times$ 4.03 \\
			\bottomrule
			\hline
		\end{tabular}
	\label{tab:recon}
\end{table}


\def\arraystretch{0.75}
\begin{table*}[t]
	\centering
	\caption{
		Image classification accuracy [\%] of quantized ViTs on the ImageNet dataset.
		W$B$A$B$ denotes that weights and activations are quantized into $B$-bit.
		\MP indicates MPQ.
		Note that \method achieves the highest accuracy in all cases.
	}
		\setlength{\tabcolsep}{9.2pt}
		\small
		\begin{tabular}{lccccccccc}
			\hline
			\toprule
			\multirow{2}[3]{*}{\textbf{Method}} & \multirow{2}[3]{*}{\textbf{W/A}} & \multicolumn{2}{c}{\textbf{ViT}} & \multicolumn{3}{c}{\textbf{DeiT}} & \multicolumn{2}{c}{\textbf{Swin}} & \multirow{2}[3]{*}{\textbf{Average}} \\
			\cmidrule(lr){3-4} \cmidrule(lr){5-7} \cmidrule(lr){8-9}
			& &  \textbf{ViT-S} &\textbf{ViT-B} & \textbf{DeiT-T} & \textbf{DeiT-S} & \textbf{DeiT-B} & \textbf{Swin-S} & \textbf{Swin-B} & \\
			\midrule
			Full-Precision & 32/32 & 81.38 & 84.53 & 72.13 & 79.83 & 81.80 & 83.23 & 85.27 & 81.17\\
			\midrule
			QDrop~\shortcite{QDrop} & 4/4 &
			17.77 & 21.72 & 31.65 & 35.79 & 65.47 & 78.92 & 80.49 & 47.40 \\
			PTQ4ViT~\shortcite{PTQ4ViT} & 4/4 &
			42.57 & 30.69 & 36.96 & 34.08 & 64.39 & 76.09 & 74.02 & 51.26 \\
			PD-Quant~\shortcite{PD-Quant} & 4/4 &
			32.64 & 34.86 & 58.50 & 64.85 & 73.76 & 77.04 & 75.84	 & 59.64 \\
			RepQ-ViT~\shortcite{RepQ-ViT} & 4/4 &
			65.05 & 68.48 & 57.43 & 69.03 & 75.61 & 79.45 & 78.32	 & 70.48 \\
			OPTQ~\shortcite{OPTQ} & 4/4 &
			67.59 & 75.12 & 58.96 & 70.85 & 76.10 & 80.17 & 81.08	 & 72.84 \\
			ERQ~\shortcite{ERQ} & 4/4 &
			68.91 & 76.63 & 60.29 & 72.56 & \underline{78.23} & 80.74 & 82.44 & 74.26 \\
			AdaLog~\shortcite{AdaLog} & 4/4 &
			72.75 & 79.68 & 63.52 & 72.06 & 78.03 & 80.77 & 82.47 & 75.61 \\
			$\text{VT-PTQ}^{\dagger}$~\shortcite{VT-PTQ} & 4\MP/4\MP &
			\underline{73.69} & \underline{80.10} &
			\underline{63.90} & \underline{72.78} &
			\underline{78.30} & \underline{80.96} &
			\underline{82.80} & \underline{76.07} \\
			\textbf{\method (Proposed)} & 4\MP/4\MP &
			\textbf{74.02} & \textbf{81.91} & \textbf{65.71} & \textbf{75.40} & \textbf{79.24} & \textbf{81.76} & \textbf{83.87} & \textbf{77.42} \\
			\midrule
			
			QDrop~\shortcite{QDrop} & 3/3 &
			4.44 & 8.00 & 30.73 & 22.67	& 24.37	& 60.89	& 54.76	& 29.41 \\
			PTQ4ViT~\shortcite{PTQ4ViT} & 3/3 &
			0.01 & 0.01 & 0.04 & 0.01 & 0.27& 0.35& 0.29 & 0.14 \\
			RepQ-ViT~\shortcite{RepQ-ViT} & 3/3 &
			0.43 & 0.14 & 0.97 & 4.37 & 4.84 & 8.84 & 1.34 & 2.99 \\
			AdaLog~\shortcite{AdaLog} & 3/3 &	
			13.88 & 37.91 & 31.56 & 24.47 & 57.45 & 64.41 & 69.75 & 42.78 \\
			$\text{VT-PTQ}^{\dagger}$~\shortcite{VT-PTQ} & 3\MP/3\MP &
			\underline{16.62} & \underline{42.13} & \underline{32.98} & \underline{26.37} & \underline{60.14} & \underline{69.80} & \underline{73.51} & \underline{45.94} \\
			\textbf{\method (Proposed)} & 3\MP/3\MP &
			\textbf{23.06} & \textbf{48.53} & \textbf{37.54} & \textbf{45.38} & \textbf{61.44} & \textbf{70.91} & \textbf{75.82} & \textbf{51.81} \\
			
			\bottomrule
			\hline
			\addlinespace[0.5ex]
			\multicolumn{10}{l}{$\dagger$: AdaLog quantization with bit allocation by VT-PTQ.}
		\end{tabular}
	\label{tab:q1}
\end{table*}


\smallsection{Step 2. Error-based iterative updates}
Our goal is to further update bit assignments to reduce quantization error,
while keeping the same average bit-widths assigned from the initial assignment step.
For the goal, we iteratively update bit assignments;
at each iteration, we select two layers, and increase the first layer's bit-width by 1, while decreasing the second layer's bit-width by 1.
In this way, we guarantee that the average bit assignment remains the same.

Then, how can we find the best two layers to increase/decrease their bit-widths?
Our idea is to make decision based on gain and degradation which we define for this purpose.
The gain of a layer $l_i$ is defined to be the reduction in error when increasing the bit-width of $l_i$ by 1,
while the degradation of it is the increase in error when decreasing the bit-width of $l_i$ by 1.
Specifically, the gain and the degradation of $l_i$ are given by
$\loss_{recon}^{(B_i - 1)}(l_i) - \loss_{recon}^{(B_i)}(l_i)$ and $\loss_{recon}^{(B_i)}(l_i) - \loss_{recon}^{(B_i + 1)}(l_i)$, respectively,
where
$\loss_{recon}^{(B_i)}(l_i)$ is the layer-wise reconstruction error of $l_i$ at $B_i$-bit:
%
\begingroup \small
\begin{equation*}
	\label{eq:recon}
	\loss_{recon}^{(B_i)}(l_i) =
	\|\weightiqdq^{(B_i)}\actiqdq^{(B_i)}-\mat{W}_i\mat{X}_i\|^2_F / \|\mat{W}_i\mat{X}_i\|^2_F,
\end{equation*}
\endgroup
where $\mat{W}_i$ and $\mat{X}_i$ are weights and activations of layer $l_i$, respectively,
and
$\weightiqdq^{(B_i)}$ and $\actiqdq^{(B_i)}$ are dequantized weights and activations after quantization, respectively.
This updates are performed iteratively until the model accuracy on the sample dataset $\mathbb{D}$ stops improving (lines 22-24 of Alg. 1).

\smallsection{Efficient estimation of error gain and degradation}
Na\"ively evaluating gain and degradation requires measuring loss for all bit‑width reallocation pairs, which is prohibitively expensive for ViTs.
To make this tractable, we approximate gain and degradation using statistical estimation, eliminating the need to recompute reconstruction errors for every bit‑width change.
Concretely, we approximate the ratio $\loss_{recon}^{(B_i - 1)}(l_i)/\loss_{recon}^{(B_i)}(l_i)$ with its expectation $\mathbb{E}(\cdot)$ over weight and activation distributions, as formalized in Lemma~\ref{lemma:ratio}.
For example, the gain $\loss_{recon}^{(B_i - 1)}(l_i) - \loss_{recon}^{(B_i)}(l_i)$ can be rewritten as $\loss_{recon}^{(B_i)}(l_i)\big(\frac{\mathbb{E}(\loss_{recon}^{(B_i - 1)}(l_i))}{\mathbb{E}(\loss_{recon}^{(B_i)}(l_i))} - 1\big)$, where the ratio term is substituted with the pre‑computed expectation.
%
\begin{lemma}[Expected ratio of reconstruction losses]
	{\color{white}Thank}
	\label{lemma:ratio}
	Assume that given a layer $l_i$, its weight $\weightifp$ and activation $\actifp$ are independent.
	Then, the expected ratio between the reconstruction error at $(B_i-1)$-bit and $B_i$-bit is:
	%
	\begingroup\small
	\[
		\frac{\mathbb{E}(\loss_{recon}^{(B_i - 1)}(l_i))}
		{\mathbb{E}(\loss_{recon}^{(B_i)}(l_i))}
		= \frac{k(B_i-1; l_i)}{k(B_i; l_i)},
	\]
	\endgroup
	where $k(\cdot)=\mathbb{E}\Big(\big(\deltaw^{(B_i)} X + W \deltaw^{(B_i)} + \deltaw^{(B_i)} \deltax^{(B_i)}\big)^2\Big)$,
	and $\mathbb{E}(\cdot)$ denotes the expectation over random variables
	$W,X$ (distributed as elements of $~\weightifp,\actifp$)
	and $\deltaw^{(B_i)},\deltax^{(B_i)}$ (distributed as elements of $~\weightiqdq^{(B_i)}-\weightifp,\actiqdq^{(B_i)}-\actifp$).
\end{lemma}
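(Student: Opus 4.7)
The plan is to expand the reconstruction-error matrix, push expectations inside the Frobenius norm using the stated independence, and then observe that the shared denominator $\|\mat{W}_i\mat{X}_i\|_F^2$ cancels between the two bit-widths in the final ratio.

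First I would rewrite the quantization residual via the algebraic identity
\[
\weightiqdq^{(B)}\actiqdq^{(B)} - \mat{W}_i\mat{X}_i
= \deltaw^{(B)}\mat{X}_i + \mat{W}_i\deltax^{(B)} + \deltaw^{(B)}\deltax^{(B)}
\]
for each $B\in\{B_i-1,B_i\}$. Substituting into the definition of $\loss_{recon}^{(B)}(l_i)$ produces a Frobenius norm of three terms in the numerator with the common normalizer $\|\mat{W}_i\mat{X}_i\|_F^2$ in the denominator. The key structural fact is that this denominator does not depend on the bit-width $B$, so it is identical for $\loss_{recon}^{(B_i-1)}(l_i)$ and $\loss_{recon}^{(B_i)}(l_i)$.

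Second, I would push the expectation inside the Frobenius norm. Writing $\|\mat{M}\|_F^2 = \sum_{j,k}\mat{M}_{jk}^2$ and expanding each entry as an inner-product sum, the independence of $\mat{W}_i$ and $\mat{X}_i$ (inherited by $\deltaw^{(B)}$ and $\deltax^{(B)}$, which are deterministic functions of each side separately) lets all off-diagonal terms in the inner-dimension sum factor through. Under the implicit i.i.d.\ element hypothesis that underlies the lemma, each surviving per-entry contribution collapses to the single scalar expectation $k(B;l_i)$, scaled by a combinatorial factor depending only on the shape of $l_i$. Summing over $j,k$ produces $\mathbb{E}\bigl(\|\cdot\|_F^2\bigr)$ as this shape-factor times $k(B;l_i)$.

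Finally, forming the ratio $\mathbb{E}(\loss_{recon}^{(B_i-1)}(l_i))/\mathbb{E}(\loss_{recon}^{(B_i)}(l_i))$ cancels both the shape-dependent combinatorial factor and the $\|\mat{W}_i\mat{X}_i\|_F^2$ normalizer, leaving exactly $k(B_i-1;l_i)/k(B_i;l_i)$. The step I expect to be the main obstacle is the ratio-of-expectations issue: in general $\mathbb{E}[A/B]\neq\mathbb{E}[A]/\mathbb{E}[B]$, so one must either treat $\|\mat{W}_i\mat{X}_i\|_F^2$ as a per-layer deterministic normalizer or appeal to concentration of the denominator under the i.i.d.\ element regime. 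A secondary care point is justifying that the cross terms across distinct inner-dimension indices vanish, which requires the mild zero-mean assumption on the quantization residuals produced by the asymmetric uniform quantizer defined in the preliminaries; I would make both of these auxiliary conditions explicit in the final write-up.
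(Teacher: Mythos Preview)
Your proposal is correct and follows essentially the same route as the paper: expand $\widehat{\mat{W}}\widehat{\mat{X}}-\mat{W}\mat{X}$ into the three perturbation terms, write the Frobenius norm as an entrywise sum, collapse to a shape factor (the paper's $D'LD$) times the scalar $k(B;l_i)$ under the i.i.d.\ element assumption, and cancel both the shape factor and the $\|\mat{W}_i\mat{X}_i\|_F^2$ normalizer in the ratio. Your explicit flagging of the deterministic-denominator convention and the zero-mean cross-term requirement is in fact more careful than the paper's own write-up, which pulls $1/\|\mat{W}_i\mat{X}_i\|_F^2$ outside the expectation without comment and passes from $(\sum_k a_k)^2$ to $\sum_k a_k^2$ without stating the needed assumption.
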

\begin{proof}
	Refer to Section \textit{`Theoretical Analysis'}.
\end{proof}
%
We compute the expectations in Lemma~\ref{lemma:ratio} by modeling weights and activations as Gaussian $\mathcal{N}(0,1)$, following prior observations~\cite{PTQ4ViT, RepQ-ViT}.
Pre‑computed expectations for Gaussian distribution are summarized in Table~\ref{tab:recon}.
We detail the derivation process in Section \textit{`Theoretical Analysis'} and Table~\ref{tab:exp_norm}.

\section{Experiments}
\label{sec:experiments}
We present experimental results to answer the questions:

\begin{itemize}[leftmargin=7mm, itemsep=0mm, topsep=0mm]
    \item[\textbf{Q1.}] \textbf{Image classification accuracy.}
        How accurate are the quantized ViTs with \method in image classification?

    \item[\textbf{Q2.}] \textbf{Object detection precision.}
        How accurate are the quantized ViTs by \method in object detection?

    \item[\textbf{Q3.}] \textbf{Application to zero-shot quantization.}
        How accurate are quantized ViTs in zero-shot quantization?

    \item[\textbf{Q4.}] \textbf{Ablation study.}
        Does each component of \method help improve performance?
%
    \item[\textbf{Q5.}] \textbf{Case study on bit allocation.}
        Does \method assign bits according to quantization sensitivity?
\end{itemize}

\def\arraystretch{0.8}
\begin{table*}[t]
	\centering
	\caption{
		Precision of quantized models on MS-COCO dataset.
		Note that \method achieves the best performance in all cases.
	}
		\setlength{\tabcolsep}{8.4pt}
        \small
		\begin{tabular}{lcccccccccc}
			\hline
			\toprule
			& & \multicolumn{4}{c}{\textbf{Mask R-CNN}} & \multicolumn{4}{c}{\textbf{Cascade Mask R-CNN}} &
			\\
			\cmidrule(lr){3-6} \cmidrule(lr){7-10}			\textbf{Method} & \textbf{W/A} & \multicolumn{2}{c}{\textbf{Swin-T}} & \multicolumn{2}{c}{\textbf{Swin-S}} & \multicolumn{2}{c}{\textbf{Swin-T}} & \multicolumn{2}{c}{\textbf{Swin-S}} & \textbf{Average}\\
			\cmidrule(lr){3-4} \cmidrule(lr){5-6} \cmidrule(lr){7-8} \cmidrule(lr){9-10}
			&  & $\text{AP}^\text{box}$ & $\text{AP}^\text{mask}$ &  $\text{AP}^\text{box}$ & $\text{AP}^\text{mask}$ &  $\text{AP}^\text{box}$ & $\text{AP}^\text{mask}$ &  $\text{AP}^\text{box}$ & $\text{AP}^\text{mask}$ & \\
			\midrule
			Full-Precision & 32/32 & 46.0 & 41.6 & 48.5 & 43.3 & 50.4 & 43.7 & 51.9 & 45.0 & 46.3\\
			\midrule
			QDrop~\shortcite{QDrop} & 4/4 & 12.4 & 12.9 &	42.7 & 40.2 & 23.9 & 21.2 & 24.1 & 21.4 & 24.9 \\
			PD-Quant~\shortcite{PD-Quant} & 4/4 & 17.7 & 18.1 & 32.2 & 30.9 & 35.5 & 31.0 & 41.6 & 36.3 & 30.4 \\
			RepQ-ViT~\shortcite{RepQ-ViT} & 4/4 & 36.1 & 36.0 & 44.2 & 40.2 & 47.0 & 41.4 & 49.3 & 43.1 & 42.2\\
			OPTQ~\shortcite{OPTQ} & 4/4 & 36.3 & 36.3& 42.9 & 40.2 & 47.1 & 41.5 & 49.2 & 43.2 & 42.1\\
			ERQ~\shortcite{ERQ} & 4/4 & 36.8 & 36.6 & 43.4& 40.7 & 47.9 & 42.1 & 50.0 & 43.6 & 42.6\\
			AdaLog~\shortcite{AdaLog} & 4/4 &
			39.1 & \underline{37.7} & \underline{44.3} & 41.2 & 48.2 & 42.3 & 50.6 & 44.0 & 43.4 \\
			$\text{VT-PTQ}^{\dagger}$~\shortcite{VT-PTQ} & 4\MP/4\MP &
			\underline{39.2} & \underline{37.7} &
			\underline{44.3} & \underline{41.3} &
			\underline{48.3} & \underline{42.5} &
			\underline{50.9} & \underline{44.2} &
			\underline{43.6} \\
			\textbf{\method (Proposed)} & 4\MP/4\MP & \textbf{39.8} & \textbf{38.4} & \textbf{44.9} & \textbf{41.8} & \textbf{49.0} & \textbf{43.1} & \textbf{51.1} & \textbf{44.5} & \textbf{44.1}  \\
			\bottomrule
			\hline
			\addlinespace[0.5ex]
			\multicolumn{10}{l}{$\dagger$: AdaLog quantization with bit allocation by VT-PTQ.}
		\end{tabular}
	\label{tab:q2}
\end{table*}


\subsection{Experimental Setup}
\label{subsec:setup}
	We briefly introduce the experimental setup.
	Refer to Section \textit{`Experimental Setup'} for further details.

\smallsection{Setup}
We evaluate our method with ViT~\shortcite{ViT}, DeiT~\shortcite{DeiT}, and Swin~\shortcite{Swin} models on ImageNet~\shortcite{ImageNet} dataset for image classification and zero-shot quantization, and MS COCO~\shortcite{MS_COCO} dataset for object detection.

\smallsection{Competitors and Details}
We compare \method with nine existing PTQ methods for image classification and object detection tasks.
For zero-shot quantization, we set PSAQ-ViT~\shortcite{PSAQ-ViT} as our baseline.
%
We follow the settings from PSAQ-ViT~\shortcite{PSAQ-ViT} and AdaLog~\shortcite{AdaLog} for fair comparison.

\subsection{Image Classification Accuracy (Q1)}
\label{subsec:q1}
We evaluate the image classification accuracy of ViTs quantized by \method against existing quantization methods in Table~\ref{tab:q1}.
%
%
\method consistently improves quantized models across diverse bit-widths and architectures, achieving up to 5.87\%p higher average accuracy.
Notably, \method becomes increasingly effective as the bit‑width decreases,
highlighting its robustness in challenging low‑precision settings.

\begin{table}[t]
    \def\arraystretch{0.6}
	\centering
	\caption{
		Zero-shot quantization accuracy [\%] on ImageNet dataset.
		\method consistently shows the best performance.
	}
    \small
    \setlength{\tabcolsep}{5pt}
        \begin{tabular}{lccccc}
			\hline
			\toprule
			\multirow{2}[3]{*}{\textbf{Method}} & \multirow{2}[3]{*}{\textbf{W/A}} & \multicolumn{2}{c}{\textbf{DeiT}} & \multicolumn{2}{c}{\textbf{Swin}} \\
			\cmidrule(lr){3-4} \cmidrule(lr){5-6}
			& & \textbf{DeiT-T} & \textbf{DeiT-S} & \textbf{Swin-T} & \textbf{Swin-S} \\
			\midrule
			Original & 32/32 & 72.21 & 79.85 & 81.35 & 83.20 \\
			\midrule
			PSAQ-ViT & 4/8 & 65.57 & 72.04 & 69.78 & 75.03 \\
			VT-PTQ$^\dagger$ & 4\MP/8\MP
			& \underline{65.65}
			& \underline{72.18}
			& \underline{69.91}
			& \underline{75.09} \\
			\textbf{\method} & 4\MP/8\MP
			& \textbf{66.27}
			& \textbf{72.71}
			& \textbf{70.24}
			& \textbf{75.49} \\
			\midrule
			PSAQ-ViT & 8/8 & 71.56 & 75.97 & 73.54 & 76.68 \\
			VT-PTQ$^\dagger$ & 8\MP/8\MP
			& \underline{71.58}
			& \underline{76.02}
			& \underline{73.63}
			& \underline{76.72} \\
			\textbf{\method} & 8\MP/8\MP
			& \textbf{71.77}
			& \textbf{76.20}
			& \textbf{73.76}
			& \textbf{76.85} \\
			\bottomrule
			\hline
			\addlinespace[0.5ex]
			\multicolumn{6}{l}{$\dagger$: PSAQ-ViT quantization with bit allocation by VT-PTQ.}
		\end{tabular}
	\label{tab:q3}
\end{table}


\subsection{Object Detection Precision (Q2)}
\label{subsec:q2}
We investigate the effectiveness of \method for quantized models on object detection and instance segmentation.
Table~\ref{tab:q2} shows the average precision of each quantized model's bounding box ($\text{AP}^\text{box}$) and segmentation mask ($\text{AP}^\text{mask}$).
\method achieves the state-of-the-art performance across all settings, validating its generalization to detection tasks.

\subsection{Application to Zero-shot Quantization (Q3)}
\label{subsec:q3}
We investigate the effectiveness of \method in settings without any real data.
Table~\ref{tab:q3} shows the zero-shot quantization accuracy of four ViT models.
\method enhances the quantization accuracy across various models, achieving up to 0.62\%p increase when applied to PSAQ-ViT~\cite{PSAQ-ViT}.
The results show that \method is robust towards dataset quality, effective both for real and synthetic datasets.


\begin{table}[t]
	\def\arraystretch{0.75}
	\centering
	\caption{
		Ablation study of our proposed ideas in \method.
		All ideas of \method effectively enhance the performance.
	}
		\setlength{\tabcolsep}{5pt}
        \small
		\begin{tabular}{lccccc}
			\hline
			\toprule
			\textbf{Method} & \textbf{MPQ} & \textbf{I1} & \textbf{I2} & \textbf{I3} & \textbf{Accuracy}\\
			\midrule
			Base: AdaLog~\shortcite{AdaLog} & \rxmark & \rxmark & \rxmark & \rxmark & 24.47 \\
			Base + VT-PTQ~\shortcite{VT-PTQ} & \gcmark & \rxmark & \rxmark & \rxmark & 26.37 \\
			\midrule
			Base + I1 + I2 & \gcmark & \gcmark & \gcmark & \rxmark & 44.43 \\
			Base + I1 + I3 & \gcmark & \gcmark & \rxmark & \gcmark & 27.87 \\
			\midrule
			\textbf{\method (Proposed)} & \gcmark & \gcmark & \gcmark & \gcmark & \textbf{45.38} \\
			\bottomrule
			\hline
		\end{tabular}
	\label{tab:ablation}
\end{table}


\subsection{Ablation Study (Q4)}
\label{subsec:q4}
We perform an ablation study to show that each main idea of \method improves the performance.
Table~\ref{tab:ablation} summarizes the 3-bit results of a DeiT-S model pre-trained on ImageNet.
%
Our analysis shows that all ideas of \method contribute to the improved performance, with type-aware Fisher-based metric (I2) having the strongest impact of 19.96\%p.
%

\begin{figure}[t]
	\includegraphics[width=\linewidth]{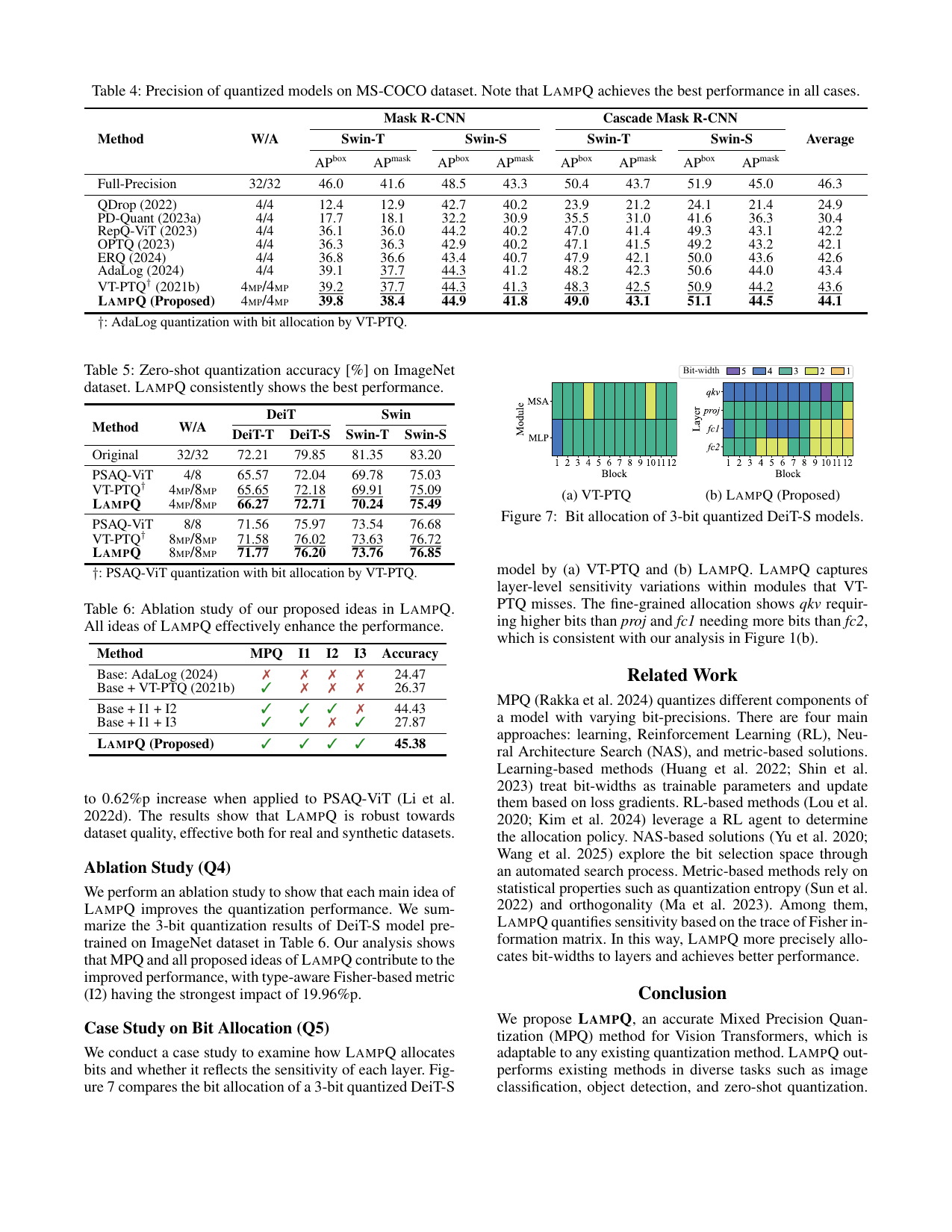}
	\caption{
		\label{fig:case}
		Bit allocation of 3-bit quantized DeiT-S models.
	}
\end{figure}


\subsection{Case Study on Bit Allocation (Q5)}
\label{subsec:q5}
We conduct a case study to examine how \method allocates bits and whether it reflects each layer's sensitivity.
Figure~\ref{fig:case} compares the bit allocation of a 3-bit DeiT-S model by (a) VT-PTQ and (b) \method.
\method captures layer-level sensitivity variations within modules that VT-PTQ misses.
Fine-grained allocation shows \qkv and \fcone requiring higher bits than \proj and \fctwo, consistent to our analysis in Figure~\ref{fig:granularity}(b).

\section{Related Work}
\label{sec:related}

MPQ~\cite{MPSurvey} quantizes different components of a model with varying bit-precisions.
There are four main approaches: learning, Reinforcement Learning (RL), Neural Architecture Search (NAS), and metric-based solutions.
Learning-based methods~\cite{SDQ, NIPQ} treat bit-widths as trainable parameters and update them based on loss gradients.
RL-based methods~\cite{AutoQ, MetaMix} leverage a RL agent to determine the allocation policy.
NAS-based solutions~\cite{BP-NAS, JAQ} explore the bit selection space through an automated search process.
Metric-based methods rely on statistical properties such as quantization entropy~\cite{QE} and orthogonality~\cite{OMPQ}.
Among them, \method quantifies sensitivity based on the trace of Fisher information matrix.
In this way, \method more precisely allocates bit-widths to layers and achieves better performance. 

\section{Conclusion}
\label{sec:conclusion}
We propose \textbf{\method}, an accurate Mixed Precision Quantization (MPQ) method for Vision Transformers, which is adaptable to any existing quantization method.
\method outperforms existing methods in diverse tasks such as image classification, object detection, and zero-shot quantization.
Future works include extending \method into various settings such as vision language models and generation tasks. 


\section{Acknowledgments}
This work was supported by Mobile eXperience (MX) Business, Samsung Electronics Co., Ltd.
U Kang is the corresponding author.

\bibliography{main}

\clearpage
\appendix

\section{Appendix}
We provide additional information on \method including frequently used notations, detailed experimental setup, algorithm, theoretical analysis, and further experimental results.

\section{Notation}
\label{appendix:notation}
We summarize the notations in the paper as Table \ref{tab:notation}.

\begin{table}[ht]
	\centering
	\small
	\setlength{\tabcolsep}{4pt}
    \def\arraystretch{1.1}
	\caption{Notation description.}
		\begin{tabular}{%
			>{\centering\arraybackslash}p{0.24\columnwidth}%
			>{\raggedright\arraybackslash}p{0.72\columnwidth}%
		}
			\hline
			\toprule
			\textbf{Symbol} & \textbf{Description} \\
			\midrule

			$\fpmodel$
			& Pre-trained model with parameters $\theta$ \\

			$\quantmodel$
			& Quantized model with parameters $\quantparam$ \\

			\midrule
			
			$\imageI \in \realset^{H \times W \times C}$
			& Input image\\

			$H$
			& Height of the image $\imageI$ \\

			$W$
			& Width of the image $\imageI$ \\
			
			$C$
			& Channels of the image $\imageI$ \\

			$P$
			& Patch size \\

			$L=HW/P^2$
			& Number of patches \\
			
			$N$
			& Number of ViT blocks \\

			\midrule

			$s$
			& Scaling factor in quantization\\
			
			$z$
			& Integer offset in quantization\\
			
			$(r_{min}, r_{max})$
			& Lower and upper bounds of a given matrix $\matrixfp$ \\
			
			$\matrixquant$
			& Quantized matrix of $\matrixfp$ \\
			
			$\matrixqdq$
			& Dequantized matrix of $\matrixquant$\\
			
			\midrule

			$\mat{W}_i$
			& Weight matrix of layer $l_i$ in the model \\

			$\mat{X}_i$
			& Activation of layer $l_i$ in the model \\
			
			$\deltawi, \deltaxi$
			& Perturbations of $\mat{W}_i$ and $\mat{X}_i$ after quantization \\
			
			$B_i$
			& Bit-width assigned to layer $l_i$ \\
			
			$\mat{H}_i$
			& Hessian matrix of layer $l_i$ \\
			
			$\mat{F}_i$
			& Fisher information matrix of layer $l_i$ \\
			
			$\trace(\cdot) $
			& Trace operator \\
			
			$\mathbb{E}(\cdot)$ & Expectation function\\

			\midrule
			
			$\Omega_i$
			& Proxy of the loss $\loss(\weightiqdq)$ of layer $l_i$ \\

			$\alpha_{type}$
			& Scaling factor for each type of layer \\
			
		
			$\mu$
			& Number of layers for $\alpha_{type}$ calculation\\
			
            $\beta$
			& Bit-width for $\alpha_{type}$ calculation \\

			$\mathbb{B}$
			& Pre-defined set of possible bit selections \\

			$\gamma$
			& Hyperparameter controlling the degree of MPQ \\

			\bottomrule
			\hline
		\end{tabular}
	\label{tab:notation}
\end{table}


\section{Experimental Setup}
\label{appendix:exp_setting}
\def\arraystretch{1.2}
\begin{table}[t]
	\centering
	\caption{Hyperparameter ranges for \method.}
		\setlength{\tabcolsep}{12pt}
		\begin{tabular}{cl}
			\hline
			\toprule
			\textbf{Hyperparameter} & \textbf{Range} \\
			\midrule
			$\mu$ & [$\frac{1}{4} N$, $\frac{1}{2} N$, $\frac{3}{4} N$, $N$]\\
			$\mathbb{B}$ & \makecell{[
				\{1, 2, 3, 4, 5\},
				\{1, 2, 3, 4\}, \\
				\{2, 3, 4, 5\},
				\{2, 3, 4\}
				]}
			\\
			$\gamma$ & [2, 4, 6, 8, 10, 15, 20] \\
			$\beta$ & [1, 2, 3, 4] \\
			\bottomrule
			\hline
		\end{tabular}
	\label{tab:hyperparam_range}
\end{table}


We describe the experimental setup, including datasets, competitors, hyperparameters, and implementation details.

\subsection{Datasets}
We utilize ImageNet (ILSVRC 2012)~\cite{ImageNet} dataset to evaluate the classification accuracy of the quantized model obtained by \method.
We evaluate quantized models on MS-COCO~\cite{MS_COCO} dataset to check their performance on object detection.
For zero-shot quantization task, we utilize real datasets (e.g. ImageNet) only for evaluation purposes.

\subsection{Competitors}
We summarize the competitors of \method as follows:

\begin{itemize}[leftmargin=4mm, itemsep=0mm]
    \item \textbf{PTQ4ViT}~\cite{PTQ4ViT} observes the power-law distribution of post-Softmax activations and introduce twin uniform quantization to reduce the quantization error in such activation distributions.
 	\item \textbf{APQ-ViT}~\cite{APQ-ViT} introduces bottom-elimination blockwise calibration to apply quantization in a blockwise manner, and Matthew-effect preserving quantization to maintain the power-law distribution of post-Softmax activations.
 	\item \textbf{QDrop}~\cite{QDrop} randomly drops the quantization of activation for smoother loss landscapes in test samples.
 	\item \textbf{PD-Quant}~\cite{PD-Quant} mitigates the overfitting of quantized models toward calibration samples by considering global information from the prediction difference between quantized and full precision models.
 	\item \textbf{RepQ-ViT}~\cite{RepQ-ViT} decouples the quantization and inference processes, by initially applying $\log\sqrt{2}$ quantization and reparameterizing the scales to hardware-friendly $\log2$ quantization in inference stage.
 	\item \textbf{OPTQ}~\cite{OPTQ} is an approximate second-order method for quantizing Transformer-based models, by utilizing second-order information.
 	\item \textbf{ERQ}~\cite{ERQ} reduces activation and weight quantization error by formulating the quantization process as a ridge regression problem.
 	\item \textbf{AdaLog}~\cite{AdaLog} is a non-uniform quantizer that optimizes the logarithmic base to accommodate the power-law distribution of activations.
 \end{itemize}

 Additionally, we compare \method with PSAQ-ViT~\cite{PSAQ-ViT} for the zero-shot quantization experiment.

\subsection{Hyperparameters}
We conduct a grid search to validate hyperparameters, and select the set with the best performance.
Table \ref{tab:hyperparam_range} reports the searched hyperparameter ranges of \method.
For competitors, we search within the range described in each paper.

\subsection{Implementation Details and Machine}
We follow the settings from PSAQ-ViT~\cite{PSAQ-ViT}, RepQ-ViT~\cite{RepQ-ViT}, and AdaLog~\cite{AdaLog} for fair comparison.
The size $S$ of sample dataset $\mathbb{D}$ is 1,024 with batch size of 32.
The average bit-width of each model is computed as the mean across all parameters, with each layer assigned a uniform bit-width.
We implement \method with PyTorch~\cite{PyTorch} and timm~\cite{Timm} libraries in Python.
We exploit the default solver provided by the PuLP~\cite{PuLP} library, which leverages CBC (COIN-OR Branch-and-Cut)~\cite{CBC} algorithm to solve the ILP problem.
The ILP optimization completes within a few seconds on a standard CPU, thus we do not observe runtime bottlenecks.
For the other methods, we reproduce the result using their open-source code if possible and implement them otherwise.
All of our experiments are done at a workstation with Intel Xeon Silver 4310 and RTX 4090. 

\section{Theoretical Analysis}
\label{appendix:proofs}


\begin{table}[t]
	\centering
	\caption{
		Expected values for Gaussian distribution $\mathcal{N}(0, 1)$.
	}
    \def\arraystretch{1.1}
    \setlength{\tabcolsep}{12pt}
	\begin{tabular}{cccc}
		\hline
		\toprule
		$B_i$ & $\mathbb{E}\big(X^2\big)$ & $\mathbb{E}\big(X\deltax\big)$ & $\mathbb{E}\big((\deltax)^2\big)$\\
		\midrule
		1 & 1 & $1.396$ E$+0$ & $5.212$ E$+0$ \\
		2 & 1 & $1.655$ E$-2$ & $3.359$ E$-1$ \\
		3 & 1 & $7.123$ E$-4$ & $6.109$ E$-2$ \\
		4 & 1 & $1.723$ E$-4$ & $1.330$ E$-2$ \\
		5 & 1 & $4.123$ E$-5$ & $3.113$ E$-3$ \\
		6 & 1 & $1.003$ E$-5$ & $7.538$ E$-4$ \\
		7 & 1 & $2.472$ E$-6$ & $1.855$ E$-4$ \\
		8 & 1 & $6.134$ E$-7$ & $4.601$ E$-5$ \\
		\bottomrule
		\hline
	\end{tabular}
	\label{tab:exp_norm}
\end{table}


\subsection{Proof of Lemma 2}
\label{supp:subsec:proof_lemma2}

\begin{proof}
	We consider a statistical model defined by the conditional probability distribution $p(y \mid x, \theta)$, which gives the probability of observing the target $y$ given an input $x$ and is parameterized by the model parameters $\theta$.
	When the loss function is the negative log-likelihood, the expected Hessian of layer $l_i$ with respect to $\theta$ can be expressed as:
	
	\begin{align*}
		& \mathbb{E}_{x \sim q(x)}\mathbb{E}_{y \sim p(y|x,\theta)}[\hessiani] \\
		&= \mathbb{E}_{x \sim q(x)}\!\Bigl[
		\mathbb{E}_{y \sim p(y|x,\theta)}\!\Bigl[
		-\nabla_\theta^2 \log p\bigl(y \mid x, \theta\bigr)
		\Bigr]\Bigr]\\
		&= \mathbb{E}_{x \sim q(x)}\!\Bigl[
		\mathbb{E}_{y \sim p(y|x,\theta)}\!\Bigl[
		-\frac{\nabla_\theta^2 p(y \mid x, \theta)}{p(y \mid x, \theta)} \\
		&+ \nabla_\theta \log p(y \mid x, \theta)\nabla_\theta \log p(y \mid x, \theta)^\top
		\Bigr]\Bigr]\\
		&\approx \mathbb{E}_{x \sim q(x)}\!\Bigl[
		\mathbb{E}_{y \sim p(y|x,\theta)}\!\Bigl[
		-\frac{\nabla_\theta^2 p(y \mid x, \theta)}{p(y \mid x, \theta)}		\Bigr]\Bigr]+\fisheri\\
		&=\mathbb{E}_{x \sim q(x)}\!\Bigl[
		\sum_{y \in \mathcal{C}} -\frac{\nabla_\theta^2 p(y \mid x, \theta)}{p(y \mid x, \theta)}p(y \mid x, \theta)\Bigr]+\fisheri\\
		&=\mathbb{E}_{x \sim q(x)}\!\Bigl[
		-\nabla_\theta^2 \sum_{y \in \mathcal{C}}p(y \mid x, \theta)\Bigr]+\fisheri\\
		&=\fisheri,
	\end{align*}
	
	where $q(x)$ and $\mathcal{C}$ denote an empirical prior distribution regarding the dataset and the class set, respectively.
	The second equality follows from the derivative identity:
	
	\begin{align*}
		-\nabla_\theta^2 \log p\bigl(y \mid x, \theta\bigr)
		&=-\nabla_\theta \Bigl[\frac{\nabla_\theta p\bigl(y \mid x, \theta\bigr)}{p\bigl(y \mid x, \theta\bigr)} \Bigr]\\
		&=-\frac{\nabla_\theta^2 p(y \mid x, \theta)}{p(y \mid x, \theta)} \\
		& + \nabla_\theta \log p(y \mid x, \theta)\nabla_\theta \log p(y \mid x, \theta)^\top.
	\end{align*}
	
	The approximation step assumes that the empirical prior distribution $q(x)$ is sufficiently close to the prior distribution.
\end{proof}

\subsection{Proof of Lemma 3}
\begin{proof}
We first calculate the expected value of the reconstruction error $\loss_{recon}^{(B_i)}(l_i)$ for the $i$-th layer $l_i$ with bit-width $B_i$.
For weights $\mat{W}_i\in \mathbb{R}^{D'\times D}$ and activations $\mat{X}_i\in\mathbb{R}^{D\times L}$ of the given layer $l_i$, let $\deltawi^{(B_i)}=\weightiqdq^{(B_i)}-\weightifp$ and $\deltaxi^{(B_i)}=\actiqdq^{(B_i)}-\actifp$ be the quantization errors of weights and activations, respectively.
Then, the expected reconstruction error is obtained as follows:

	\begin{align*}
	&&  \mathbb{E}\Big(\loss_{recon}^{(B_i)}(l_i)\Big) \\
	&&= \frac{1}{\|\mat{W}_i\mat{X}_i\|_F^2} \mathbb{E}&\big(\|(\mat{W}_i+\deltawi^{(B_i)})(\mat{X}_i+\deltaxi^{(B_i)})\\
	&&&-\mat{W}_i\mat{X}_i\|_F^2\big) \\
	&&= \frac{1}{\|\mat{W}_i\mat{X}_i\|_F^2}\mathbb{E}&\big(\|\deltawi^{(B_i)}\mat{X}_i + \mat{W}_i\deltaxi^{(B_i)} \\
	&&& + \deltawi^{(B_i)}\deltaxi^{(B_i)}\|_F^2\big) \\
	&&= \frac{1}{\|\mat{W}_i\mat{X}_i\|_F^2}\mathbb{E}&\Big(\sum_{r=1}^{D'}\sum_{c=1}^{L}\big(
		(\deltawi^{(B_i)}\mat{X}_i)_{rc}
		+(\mat{W}_i\deltaxi^{(B_i)})_{rc}\\
	&&& +(\deltawi^{(B_i)}\deltaxi^{(B_i)})_{rc}
		\big)^2\Big) \\
	&&= \frac{1}{\|\mat{W}_i\mat{X}_i\|_F^2}\mathbb{E}&\Big(\sum_{r=1}^{D'}\sum_{c=1}^{L}\sum_{k=1}^{D}\big(
		(\deltawi^{(B_i)})_{rk}(\mat{X}_i)_{kc}\\
	&&&	+(\mat{W}_i)_{rk}(\deltaxi^{(B_i)})_{kc}\\
	&&& +(\deltawi^{(B_i)})_{rk}(\deltaxi^{(B_i)})_{kc}
		\big)^2\Big).
	\end{align*}
	
	We further simplify the summation by the assumption that the weights and activations are independent, which allows us to separate the expectations of the individual terms.
	Remind that random variables $W,X,\deltaw^{(B_i)}$, and $\deltax^{(B_i)}$ are distributed as elements of $\weightifp, \actifp, \deltawi^{(B_i)}$, and $\deltaxi^{(B_i)}$, respectively.
	Then, the expected value of the reconstruction error is expressed as follows:

	\begin{align*}
		& \mathbb{E}\Big(\loss_{recon}^{(B_i)}(l_i)\Big)\\
		=\quad&\frac{D'LD}{ \|\mat{W}_i\mat{X}_i\|_F^2} \mathbb{E}\Big(\big(
			\deltaw^{(B_i)} X
			+ W \deltax^{(B_i)}
			+ \deltaw^{(B_i)} \deltax^{(B_i)}
			\big)^2\Big).
	\end{align*}

	Note that the term $\frac{D'LD}{ \|\mat{W}_i\mat{X}_i\|_F^2}$ is a constant factor that is independent of the quantization bit-width $B_i$.
	Hence, the ratio between the expected reconstruction errors for different bit-widths $B_i$ and $B_i-1$ is obtained as follows:

	\begin{align}
		&\frac{\mathbb{E}(\loss_{recon}^{(B_i - 1)}(l_i))}
		{\mathbb{E}(\loss_{recon}^{(B_i)}(l_i))} \nonumber\\
		=& \frac{\mathbb{E}\Big(\big(
			\deltaw^{(B_i-1)} X
			+ W \deltax^{(B_i-1)}
			+ \deltaw^{(B_i-1)} \deltax^{(B_i-1)}
			\big)^2\Big)}
		{\mathbb{E}\Big(\big(
			\deltaw^{(B_i)} X
			+ W \deltax^{(B_i)}
			+ \deltaw^{(B_i)} \deltax^{(B_i)}
			\big)^2\Big)}.	\label{eq:ratio}
	\end{align}

	Thus, we obtain the lemma as stated, finishing the proof.
\end{proof}

%
\begin{figure}[t]
	\centering
	\begin{subfigure}[t]{0.49\linewidth}
		\centering
		\includegraphics[width=\linewidth, height=\linewidth]{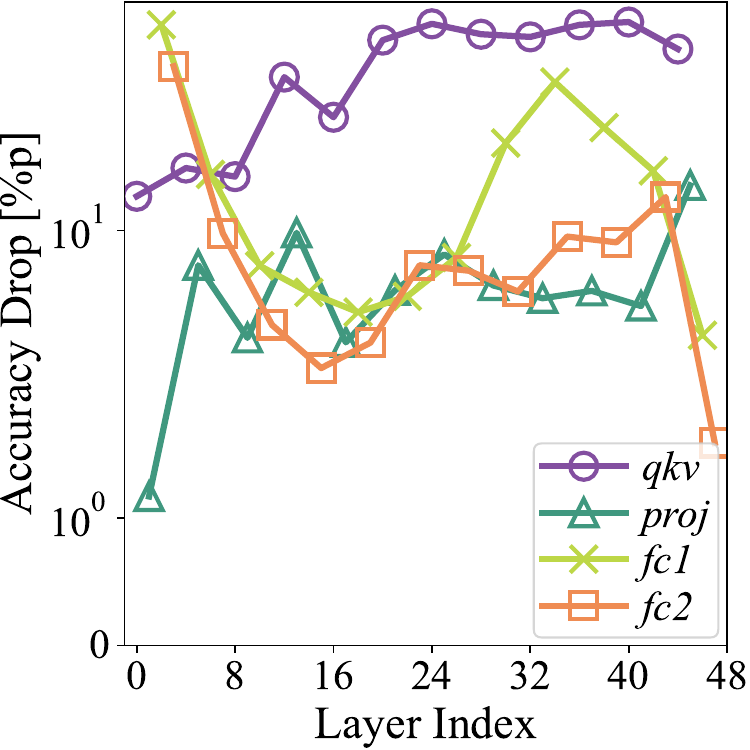}
		\caption{ViT-S}
		\label{fig:sensitivity:vit-s}
	\end{subfigure}
	\begin{subfigure}[t]{0.49\linewidth}
		\centering
		\includegraphics[width=\linewidth, height=\linewidth]{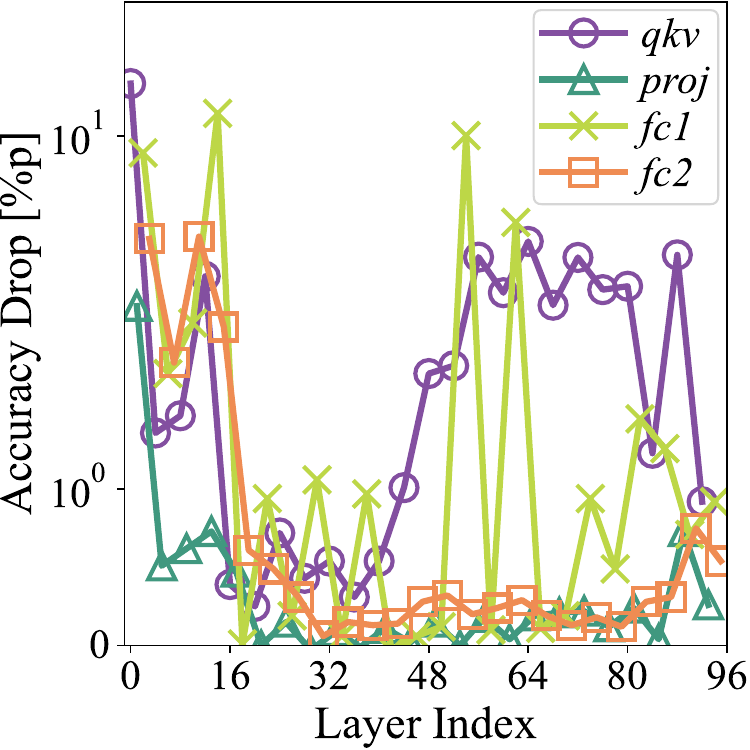}
		\caption{Swin-S}
		\label{fig:sensitivity:swin-s}
	\end{subfigure}
	\caption{
		\label{supp:fig:sensitivity}
		Impact on accuracy when quantizing a single layer of the 32-bit (a) ViT-S and (b) Swin-S models to 1-bit following AdaLog~\cite{AdaLog} while keeping the other layers unchanged.
		Sensitivity towards quantization varies significantly across  layers.
	}
\end{figure}
%

\subsection{Calculating the Relative Reconstruction Error}
	We calculate the Equation~\eqref{eq:ratio} to obtain the relative reconstruction error for each bit width $B_i$.
	Note that weight $W$ and activation $X$ are independent and both follows a Gaussian distribution.
	Since the scale of each distribution cancels out in the ratio in Equation~\eqref{eq:ratio},
	we consider only the standard normal distribution $\mathcal{N}(0, 1)$ for the Gaussian distribution.
	We assume that the clipping range of quantization is between $-3\sigma$ and $3\sigma$.
	Then, the expectation value of Equation~\eqref{eq:ratio} is expressed as follows
	since the weights and the activations are independent.
	\begin{align*}
		& \mathbb{E}\Big(\big(
			\deltaw^{(B_i)} X
			+ W \deltax^{(B_i)}
			+ \deltaw^{(B_i)} \deltax^{(B_i)}
			\big)^2\Big) \\
		&= \mathbb{E}\big((\deltaw^{(B_i)})^2\big)\mathbb{E}\big(X^2\big) + \mathbb{E}\big(W^2\big)\mathbb{E}\big((\deltax^{(B_i)})^2\big) \\
		& +\;\mathbb{E}\big((\deltaw^{(B_i)})^2\big)\mathbb{E}\big((\deltax^{(B_i)})^2\big)
			+ 2\;\mathbb{E}\big(W\deltaw^{(B_i)}\big)\mathbb{E}\big(X\deltax^{(B_i)}\big) \\
		& + 2\;\mathbb{E}\big(W\deltaw^{(B_i)}\big)\mathbb{E}\big((\deltax^{(B_i)})^2\big)
			+ 2\;\mathbb{E}\big((\deltaw^{(B_i)})^2\big)\mathbb{E}\big(X\deltax^{(B_i)}\big)
	\end{align*}
	We compute the value of each term numerically for each bit width $B_i$, as summarized in Table~\ref{tab:exp_norm}.
Then, we obtain the relative reconstruction errors for each bit width.

\section{Algorithm}
\label{appendix:algorithm}

\setlength{\textfloatsep}{16pt}
\begin{algorithm}[H]
	\caption{Quantization procedure of \method}
	\label{alg:method}
	\begin{algorithmic}[1]
		\Statex \hspace*{-\algorithmicindent}
		\textbf{Input:}
		A pre-trained full-precision ViT model $f_\theta$ with $N$ blocks,
		target bit-width $b_t$,
		sample dataset $\mathbb{D}$,
		and hyperparameters $\beta$, $\gamma$, $\mu$, and $\mathbb{B}$.
		\Statex \hspace*{-\algorithmicindent} \textbf{Output: }
		Quantized ViT model $f_{\theta'}$ within the $b_t$ bit limit
		\Statex /*\;\textit{Calculation of type-aware Fisher-based metric}\;*/
		\State Assess the initial accuracy of the pre-trained model
		\State Randomly sample $\mu$ blocks as $\mathbb{L}_{\mu}$
		\Statex //\;\textit{Obtain scaling factors $\alpha_{t}$ for each layer type $t$}
		\For{each type $t$ of layer}
		\State Calculate the average accuracy loss $A^{(t)}$ compared
		\Statex \quad\quad to the initial accuracy of the pre-trained model
		\Statex \quad\quad when quantizing a layer of type $t$ in $\mathbb{L}_{\mu}$ to $\beta$ bit
		\State Calculate the average trace $\trace(\fisher^{(t)})$ of Fisher
		\Statex \quad\quad information matrix of the quantized layer when
		\Statex \quad\quad quantizing a layer of type $t$ in $\mathbb{L}_{\mu}$ to $\beta$ bit
		\State Obtain the average accuracy loss per sensitivity
		\Statex \quad\quad $\alpha_t$ as $A^{(t)}/\trace(\fisher^{(t)})$
		\EndFor
		\For {each layer $l$ in $f_\theta$}
		\State Estimate error $\Omega_l$ with scaling factor $\alpha_t$
		\EndFor
		\Statex
		\Statex /*\;\textit{Bit allocation process}\;*/
		\Statex //\;\textit{Step 1: Initial assignment}
		\State Initialize the bit allocation $\{B_i^*\}_{i=1}^{4N}$ for each layer by solving an ILP problem 
		\State Quantize $f_\theta$ to $f_{\theta'}$ with $\{B_i^*\}_{i=1}^{4N}$
		\Statex
		\Statex //\;\textit{Step 2: Error-based iterative updates}
		\While {True}
		\For {each layer $l$ in $f_{\theta'}$}
		\State Obtain the current reconstruction loss $\loss_{recon}^{(B_i)}(l)$
		\Statex \quad\quad//\;\textit{Estimation using Table 2}
		\State Estimate the accuracy gain $L_l^{(up)}$ when
		\Statex \quad\quad\quad increasing the bit-width of $l$ by 1
		\State Estimate the accuracy loss $L_l^{(down)}$ when
		\Statex \quad\quad\quad decreasing the bit-width of $l$ by 1
		\EndFor
		\State Find the layer $u$ with the highest accuracy gain
		\State Find the layer $d$ with the lowest accuracy loss
		\State Obtain $f_{\theta^{cal}}$ by increasing the bit-width of $u$ by 1
		\Statex \quad\quad and decreasing the bit-width of $d$ by 1 from $f_{\theta'}$
		\If {$f_{\theta^{cal}}$ is less accurate than $f_{\theta'}$}
		\State \textbf{break}
		\EndIf
		\State $f_{\theta'}\leftarrow f_{\theta^{cal}}$
		\EndWhile
		\State \Return $f_{\theta'}$
	\end{algorithmic}
\end{algorithm}


We present the overall algorithm for \method in Algorithm~\ref{alg:method}.
Note that any ViT quantization technique whose quantization granularity is not larger than a layer such as RepQ-ViT~\cite{RepQ-ViT}, ERQ~\cite{ERQ}, and AdaLog~\cite{AdaLog} is applicable to quantize the model according to the bit allocation found by \method.

\section{Further Experiments}
\label{appendix:fur_experiments}
\subsection{Layer-wise Sensitivity of Different Models}
\label{supp:subsec:sensitivity}

We empirically analyze that layer-wise sensitivity to quantization varies significantly, making module-based mixed-precision approaches ineffective for optimal performance.
We investigate the layer-wise sensitivity for other ViT variants, such as ViT-S~\cite{ViT} and Swin-S~\cite{Swin}, and present the results in Figure~\ref{supp:fig:sensitivity}.
As shown in the figure, the quantization sensitivity differs greatly between layers, regardless of the model type.
This observation confirms that existing module-wise MPQ approaches fail to capture this layer-wise sensitivity, leading to suboptimal performance.

\begin{table}[t]
	\centering
	\caption{
		Runtime analysis between AdaLog~\cite{AdaLog}, VT-PTQ~\cite{VT-PTQ}, and \method (Proposed).
        hrs. and sec. indicate hours and seconds, respectively.
        \method is up to 335$\times$ faster than VT-PTQ.
	}
    \small
		\setlength{\tabcolsep}{2pt}
		\begin{tabular}{lcccc}
			\hline
			\toprule
			\textbf{Method} & \textbf{W/A} & \textbf{ViT-S} & \textbf{DeiT-S} & \textbf{Swin-S} \\
			\midrule
			AdaLog~\shortcite{AdaLog} & 3/3 & 165.3 sec. & 278.9 sec. & 648.8 sec. \\
			\midrule
			VT-PTQ~\shortcite{VT-PTQ}$^\dagger$ & 3\MP/3\MP & 23.8 hrs. & 40.2 hrs. & 93.5 hrs. \\
			\textbf{\method (Proposed)} & 3\MP/3\MP & 323.6 sec. & 484.5 sec. & 1002.6 sec. \\
			\bottomrule
			\hline
			\addlinespace[0.5ex]
			\multicolumn{5}{l}{$\dagger$: AdaLog quantization with bit allocation by VT-PTQ.}
		\end{tabular}
	\label{supp:tab:runtime}
\end{table}

\subsection{Runtime Analysis}
\label{supp:subsec:runtime}
We analyze the computational overhead of \method by measuring the quantization time of Algorithm~\ref{alg:method}.
Table~\ref{supp:tab:runtime} compares the total quantization time between AdaLog~\cite{AdaLog}, VT-PTQ~\cite{VT-PTQ}, and \method.
Note that AdaLog relies on uniform precision, while others are MPQ methods.
The results indicate that \method consistently achieves speedups of 250$\times$ or more, with maximum gains of 335$\times$ over VT-PTQ.
Compared to VT-PTQ, which assigns bits by exploring a Pareto frontier and testing hundreds of cases, \method initializes bit assignments with a type-aware Fisher-based metric and iteratively updates them based on estimated reconstruction error, significantly reducing computational overhead. 

\end{document}